\def\eqref#1{equation~\ref{#1}}
\def\1{\bm{1}}
\DeclareMathAlphabet{\mathsfit}{\encodingdefault}{\sfdefault}{m}{sl}
\SetMathAlphabet{\mathsfit}{bold}{\encodingdefault}{\sfdefault}{bx}{n}
\newcommand{\bs}[1]{\boldsymbol{\mathbf{#1}}}
\newcommand{\bbE}{\mathbb{E}}
\newcommand{\bbR}{\mathbb{R}}
\newtheorem{theorem}{Theorem}
\newtheorem{assumption}[theorem]{Assumption}
\newcommand\nnfootnote[1]{%
  \begin{NoHyper}
  \renewcommand\thefootnote{}\footnote{#1}%
  \addtocounter{footnote}{-1}%
  \end{NoHyper}
}
\title{AdaPM: a Partial Momentum Algorithm for  LLM Training}
\author{Yimu Zhang$ ^{*1}$, Yuanshi Liu$^{*1}$, Cong Fang$^{1,2}$\\
$^{1}$ State Key Lab of General AI, School of Intelligence Science and Technology, Peking University \\
$^{2}$ Institute for Artificial Intelligence, Peking University \\
}
\begin{document}

\maketitle
\nnfootnote{$*$: Equal contribution. Correspondence to \texttt{fangcong@pku.edu.cn}.}

\begin{abstract}
In the training of large language models, momentum is widely used and often demonstrated to achieve significant acceleration. However, storing momentum typically presents memory challenges. In this paper, we propose AdaPM, an adaptive training strategy that leverages partial momentum to implement a memory-efficient optimizer. To this end, AdaPM utilizes a non-uniform momentum design: for most blocks, full momentum is not necessary to preserve the performance of the optimization. In the momentum design of AdaPM, to mitigate the bias and performance loss caused by partial momentum, we enhance the partial momentum by a bias correction technique. Empirically, we verify that our approach reduces memory by over $90\%$ in momentum while maintaining both efficiency and performance for pretraining various language models ranging from 60M to 1.5B, as well as for supervised fine-tuning and RLHF. AdaPM can further reduce memory by up to $95\%$ in optimizer states by combining the memory-efficient technique on the second-order statistic, saving over $30\%$ GPU hours for pretraining GPT-2 1.5B.\footnote{Our implementation for AdaPM will be released at github.}
\end{abstract}

\section{Introduction}\label{sec:intro}
Efficient optimizers, working as engines,  are a key factor in the success and boom of modern large language models (LLMs) \citep{vaswani2017attention,achiam2023gpt,touvron2023llama,liu2024deepseek}. However, besides accelerating training, the optimizers also introduce significant memory overhead, posing a major challenge to limited memory resources. A typical example is the widely-used Adam optimizer \citep{kingma2014adam}, which requires two additional sets of values—the first- and second-order statistics estimators for every parameter—thereby significantly increasing the demand for device memory. As model sizes increase, model performance continues to improve, yet the memory occupied by optimizer states alone becomes dominant \citep{zhao2024galore}, creating substantial implementation and time-consuming challenges.


The challenges have spurred significant interest in designing memory-efficient optimizers.  The pursuit of these optimizers is driven by  dual benefits \citep{zhao2024galore}: they circumvent the limitations on model size and enable larger batch sizes during parallel training, thereby lowering communication overhead,  and directly speeding up optimization.

Current approaches to reducing optimizer states primarily focus on second-order statistics, with comparatively less progress made in improving first-order statistics. This asymmetry stems from their distinct roles. Second-order statistics are non-negative and serve as per-parameter scale estimators. \citet{zhangadam} shows that they typically exhibit near-uniform scales within architectural units such as blocks or neurons. This reducibility has motivated methods like Adafactor \citep{shazeer2018adafactor} and Adam-mini \citep{zhangadam}, which significantly compress second-order statistics.

In contrast, first-order statistics---known as momentum in optimization theory---are widely recognized for accelerating convergence \citep{sutskever2013importance}. However, they are signed and highly sensitive in governing function value descent \citep{kunstner2023noise, fu2023and}. Attempts to reduce first-order statistics, such as the low-rank update algorithms  \citep{cosson2023low, zhao2024galore} often lead to non-negligible performance degradation during pretraining. Consequently, whether first-order statistics exhibit similar reducibility as second-order states and can be substantially compressed without compromising performance remains an open and important question.

We propose the method,  \textbf{Ada}ptive \textbf{P}artial \textbf{M}omentum, AdaPM,  which significantly reduces the memory footprint of first-order statistics without performance degradation at the \emph{first} time. One first insight of our method originates from the recent analyses that break down the integrated transformer architecture and reveal the heterogeneity across its components \citep{zhang2024transformers}. We find that most blocks do not require full
momentum acceleration. The second insight stems from the observation that momentum is only approximately low-rank. Its singular values are skewed: a few large singular values followed by a long tail of small yet also consequential ones (see Fig. \ref{fig:emp insight}). And truncating the tail, as in \citet{zhao2024galore}, slows convergence. To address this, we introduce a \emph{novel} residual compensation technique that corrects the discrepancy between the full momentum and its low-rank approximation. This technique restores discarded descent directions by rescaling the residual between the full momentum and its current approximation (see details in Section \ref{sec: debiased}), thereby mitigating performance degradation caused by the approximation. Furthermore, our method can be combined with approaches that compress second-order statistics, such as Adam-mini \citep{zhangadam}, enabling a substantial reduction in the overall memory footprint of optimizer states.


\vspace{-4pt}
\begin{figure}[tb]
    \centering
    \subfigure[ Loss v.s. iteration]{
    \begin{minipage}{0.32\linewidth}
    \includegraphics[width=47mm]{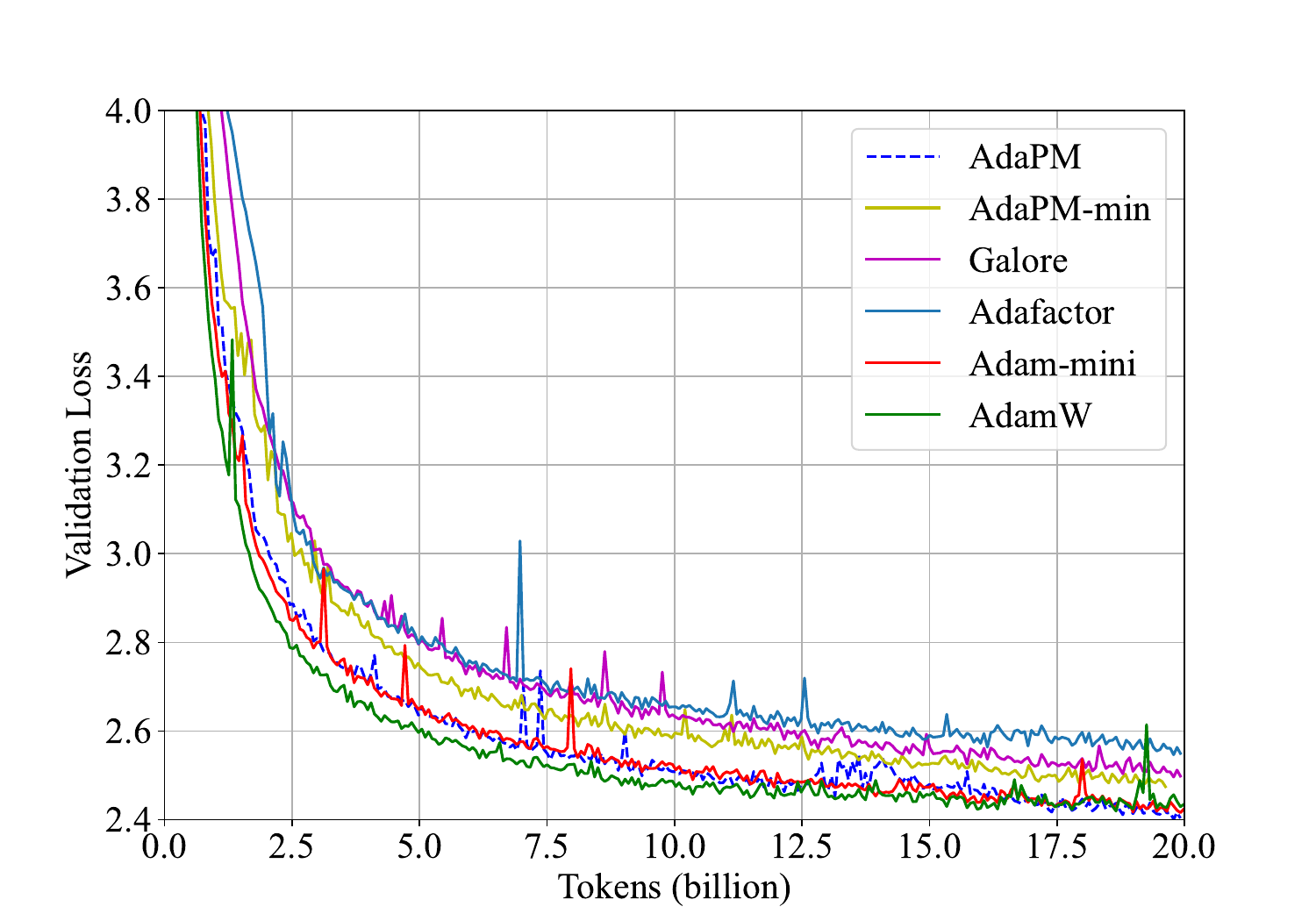}
    \end{minipage}
    }
    \subfigure[Memory cost]{
    \begin{minipage}{0.32\linewidth}
    \includegraphics[width=46mm]{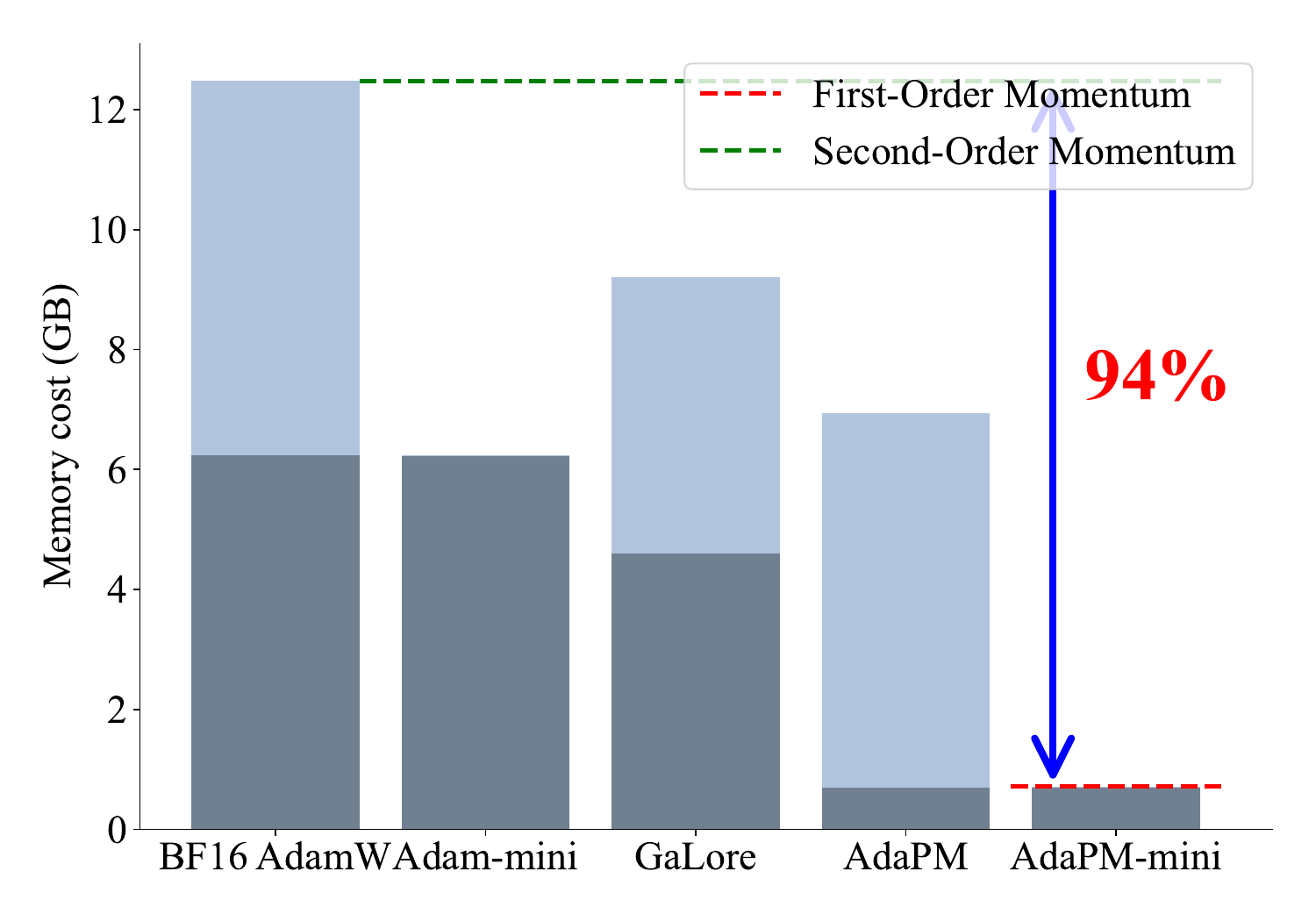}
    \end{minipage}
    }
     \subfigure[ An illustration of AdaPM]{
    \begin{minipage}{0.31\linewidth}
    \includegraphics[width=47mm]{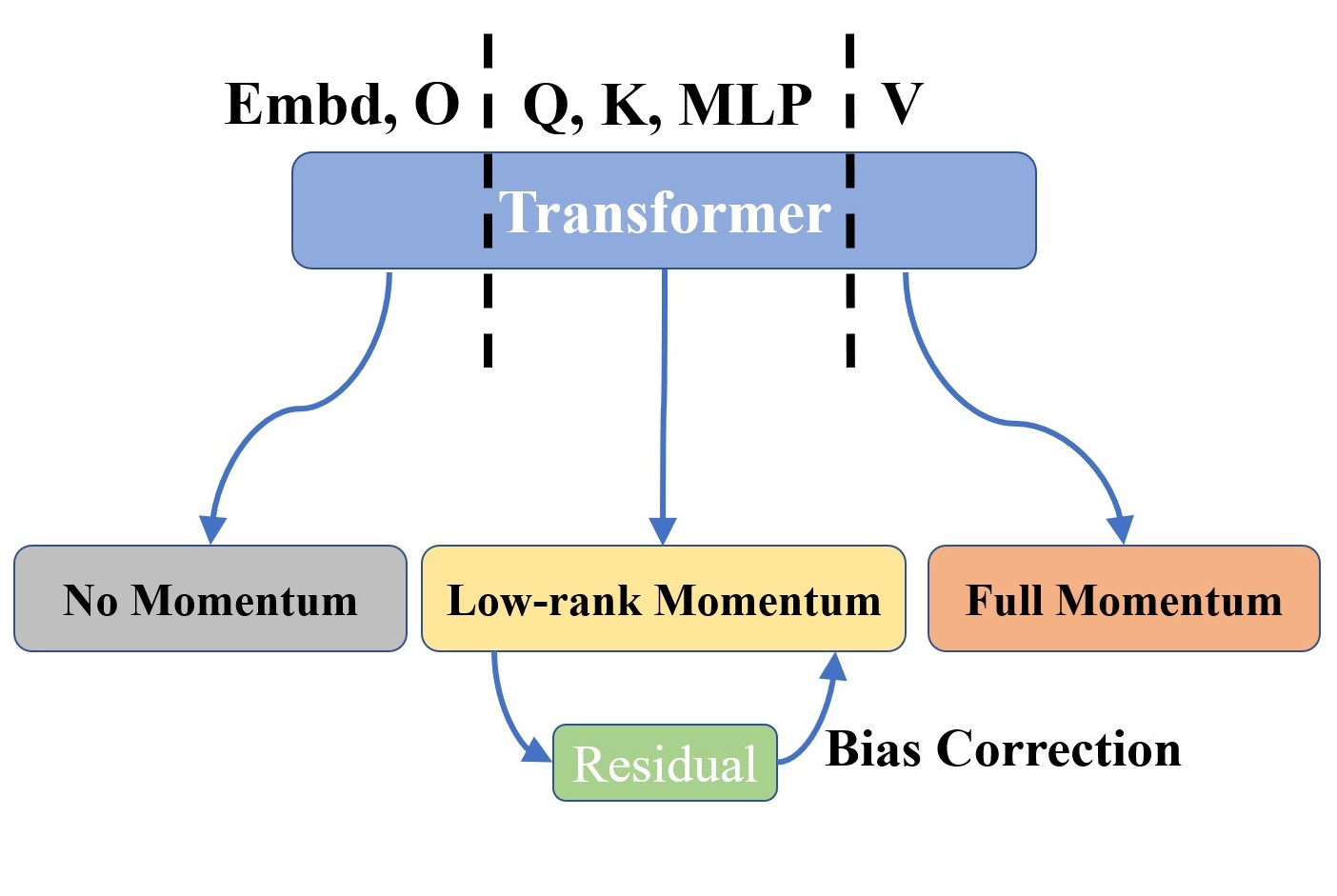}
    \end{minipage}
    }
    \caption{AdaPM takes less memory and can reach higher
 throughput with on par or better performance than AdamW.  (a)  Results for GPT-2 1.5B pre-training.  (b) The memory cost when training GPT-2 1.5B with various optimizers. The experimental details are shown in Section \ref{sec:ex-pre}. (c) AdaPM assigns different momentum
designs to different blocks and enhances the partial momentum using a bias-
corrected approach.}
    \label{fig: mem}
\end{figure}

Our experiments demonstrate that AdaPM matches or exceeds the performance of AdamW across various task with only $10\%$ of the momentum memory is required. We pre-train AdaPM under a GPT-2 series  and LLaMA series from 124M to 1.5B and also test AdaPM over Llama-3 8B post-training tasks. AdaPM achieves a consistent comparable or better convergence performance, showcasing strong scalability and robustness across model sizes. Notably, as shown in Fig \ref{fig: mem}, when combined Adam-mini, AdaPM can even achieves a remarkable $94\%$ memory saving on the optimizer state without sacrificing convergence of AdamW. Moreover, due to higher throughput and larger batch sizes resulting from memory reduction, AdaPM saving over 30$\%$ GPU hours for pretraining.

To summarize our contribution:
\begin{itemize}[leftmargin=3.3em, labelsep=1em] 
    \item Partition approach of Transformers. We investigate the impact of various blocks in the Transformer on momentum acceleration. We find that most blocks do not require full momentum acceleration: (1) Embedding and attn.proj blocks do not need momentum acceleration; (2) query, key and MLP need low rank momentum; (3) value needs full momentum.
    \item Bias-corrected  estimation of low-rank momentum. We propose a novel debiasing method that achieves unbiased estimation of full momentum, which only requires the low-rank momentum with merely $5\% $ of the original dimensions.
    \item Memory-efficient optimizer. Our work integrates the above partition principle with debiased low-rank momentum estimation, proposing a novel algorithmic framework that significantly reduces memory overhead in Adam optimization. By combining with methods like Adam-mini, this framework achieves over 95\% memory savings in the optimizer state without performance degradation.
\end{itemize}

\section{Related works}
\textbf{Lightweight optimizers focusing on second-order statistics}. Several lightweight optimization algorithms have been developed to reduce computational and memory costs by leveraging second-order statistical information. Adafactor \citep{shazeer2018adafactor} and CAME \citep{luo2023came} are memory-efficient variants of Adam employing nonnegative low-rank factorization over Adam’s second-order statistics. 
SM3 \citep{anil2019memory} employs a candidate set, which is derived from the maximum squared gradient within a specific group of parameters defined by a cover of the parameters, and determines the learning rate for the $i$-th parameter by selecting the smallest value from the candidate set. Adam-mini \citep{zhangadam}  partitions the parameters into blocks, a proposed principle on Hessian structure, and assigns a uniform learning rate for each block using the average of Adam’s second-order statistics in that block.

\textbf{Lightweight optimizers focusing on subspace learning}. Recent research has shown that the learning process primarily takes place within a lower-dimensional subspace of the parameter space \citep{gur2018gradient}. Research, such as \citet{9053036,yang2023spectral}, has applied the low-rank property of gradients during the training of neural networks to reduce the memory footprint during training. A similar approach has been widely used in meta-learning and continual learning \citep{lee2018gradient, chaudhry2020continual}.  GaLore \citep{zhao2024galore} and a novel variant, Golore \citep{he2024subspace}, calculate a low-rank gradient estimator and then calculate the first- and second-order statistics on this low-rank gradient estimator.

\section{Starting Point of Momentum Reduction}
\label{sec:starting point}

In this section, we present a detailed discussion on the potential reducibility of the momentum. We present the theoretical justification and empirical evidence in Section~\ref{sec:learningtheory} and Section~\ref{sec:empiricalevidence}, respectively.

\subsection{Theoretical Justification}\label{sec:learningtheory}

Our first illustration of reducibility in momentum is posed by the following fundamental question: Does adding momentum in optimization consistently lower the validation loss? To investigate, we adopt the standard framework of minimizing the validation risk $\mathcal{R}(\mathbf{W}) = \mathbb{E}_{\mathbf{x},y\sim\mathcal{D}}\ell(\mathbf{W};(\mathbf{x},y))$. Let $\mathbf{W}^*\in\arg\min_{\mathbf{W}}\mathcal{R}(\mathbf{W})$ denote the oracle minimizer, and let $\hat{\mathbf{W}}_{\text{opt}}$ denote the output of the optimization algorithm with stochastic gradients.



The distribution of the outputs produced by an optimization algorithm is comprised of the expectation of the output and how randomness drives the algorithm to deviate from that. This division naturally yields the following decomposition of the validation loss:
\begin{align}\label{eq:decomposition}
    \mathcal{R}(\hat{\mathbf{W}}_{\text{opt}}) - \mathcal{R}(\mathbf{W}^*) = \underbrace{\mathcal{R}(\hat{\mathbf{W}}_{\text{opt}}) - \mathcal{R}(\bar{\mathbf{W}}_{\text{opt}})}_{\text{term}\ \mathcal{A}} + \underbrace{\mathcal{R}(\bar{\mathbf{W}}_{\text{opt}}) - \mathcal{R}(\mathbf{W}^*)}_{\text{term} \ \mathcal{B}},
\end{align}
where $\bar{\mathbf{W}}_{\text{opt}} = \mathbb{E}\left[\hat{\mathbf{W}}_\text{opt}\right]$ is expectation of the algorithm’s output. 

Introducing momentum into deterministic problems is known to accelerate optimization \citep{polyak1964some,nesterov1983method}, suggesting that the term $\mathcal{B}$ can be optimized more efficiently with its inclusion. In contrast, for the term $\mathcal{A}$, the injected noise, momentum does not necessarily mitigate them; indeed, theoretical analyses indicate that it may even amplify the variance of $\hat{\mathbf{W}}_\text{opt}$.

We illustrate the effect of adding momentum through the generic high-dimensional linear regression problem, regressing the Gaussian covariate $\mathbf{x}\in\mathbb{R}^d$ following $\mathcal{N}(\mathbf{0}, \bf\Sigma)$ to response $y = \langle\mathbf{W}^*, \mathbf{x}\rangle + \epsilon \in\mathbb{R}$ with $\epsilon\sim\mathcal{N}(0,\sigma^2)$: $\min_{\mathbf{W}\in\mathbb{R}^d} (\langle\mathbf{W}, \mathbf{x}\rangle - y)^2$. We adopt this setting since it is both fundamental and representative: many insights into modern optimizers originate from quadratic analyses \citep{zhangadam,liu2023sophia},  and neural tangent kernel theory links such problems to the training dynamics of large-scale neural networks \citep{jacot2018neural,golikov2022neural}. For simplicity, we consider a standard learning problem in regression~\citep{caponnetto2007optimal, liu2023sophia}, where $\mathbf{\Sigma}$ is diagonal with $\mathbf{\Sigma}_{ii} = i^{-a}$ and $\mathbf{\Sigma_{ii}}\mathbf{W}_i = i^{-b}$ with $a, b \geq 1$. Hard problems typically have a smaller $b$. We compare the vanilla SGD without momentum and the accelerated SGD with momentum $1-\beta$. Smaller $\beta$ corresponds to higher momentum, and when $\beta = 1$, accelerated SGD recovers the vanilla one. The comparison is listed in the following theorem.



\begin{theorem}[Validation Loss Rates for SGD and Accelerated SGD]
\label{thm: rate}
Set a constant stepsize of $\eta = \Theta(1)$ and the number of iterations $T$. Then the validation loss of vanilla SGD is bounded by 
$ \tilde{\mathcal{O}}\left( T^{1/a - 1} + T^{1/a - b/a} \right) $. For the accelerated SGD method with momentum $1 - \beta$ (where $\beta \in (0,1]$), the validation loss after $T$ iterations is bounded by $ \tilde{\mathcal{O}}\left( T^{1/a - 1} \beta^{1/a^2 - 1/a} + T^{1/a - b/a} \beta^{\left(1/a^2 - 1/a\right)(1-b)} \right)$.
\end{theorem}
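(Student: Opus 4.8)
The plan is to use the diagonal structure of $\mathbf{\Sigma}$ to reduce both algorithms to a family of decoupled one-dimensional recursions, one per coordinate $i$, run a bias--variance analysis coordinate by coordinate, and only then sum against the power law $\lambda_i = i^{-a}$, $\lambda_i(\mathbf{W}^*_i)^2 = i^{-b}$. Writing the excess risk as $\mathcal{R}(\mathbf{W}) - \mathcal{R}(\mathbf{W}^*) = \sum_i \lambda_i(\mathbf{W}_i - \mathbf{W}^*_i)^2$ and splitting $\mathbf{W}_i - \mathbf{W}^*_i$ into its mean trajectory (driven by the deterministic gradient $\lambda_i\mathbf{W}_i$) and a zero-mean fluctuation (driven by gradient noise, whose coordinate-$i$ variance is $\Theta(\lambda_i\sigma^2)$) recovers the decomposition in \eqref{eq:decomposition}: the mean trajectory feeds term $\mathcal{B}$ and the fluctuation feeds term $\mathcal{A}$. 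For vanilla SGD the $i$-th recursion is first order, $\mathbf{W}_i \mapsto (1-\eta\lambda_i)\mathbf{W}_i + \text{noise}$, so the bias contracts like $(1-\eta\lambda_i)^T$ and the noise, once (tail-)averaged, contributes at level $\Theta(\sigma^2/T)$ for each coordinate that has equilibrated, i.e. with $\eta\lambda_i T \gtrsim 1$; these one-dimensional facts are standard.

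For the accelerated method the $i$-th recursion is second order, with transition matrix $\begin{pmatrix} 1+(1-\beta)-\eta\lambda_i & -(1-\beta) \\ 1 & 0 \end{pmatrix}$. The first real step is to diagonalize it: its characteristic roots have product $1-\beta$, and they are complex of modulus $\sqrt{1-\beta}$ exactly when $\eta\lambda_i > (1-\sqrt{1-\beta})^2$ (the \emph{underdamped} range) and real otherwise (the \emph{overdamped} range), with dominant overdamped root $1 - \eta\lambda_i/\beta + o(\eta\lambda_i/\beta)$. So momentum acts like an effective step size $\eta/\beta$ on the many small-eigenvalue coordinates while holding the large-eigenvalue coordinates at contraction rate $\sqrt{1-\beta}$, and at the same time the stationary fluctuation of each coordinate is inflated by the resolvent of this $2\times2$ system, by a $\beta$-dependent factor. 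Propagating the initial error and the injected noise through the matrix powers then gives, for each $i$, a bias factor and a variance level as explicit functions of $\eta\lambda_i$ and $\beta$.

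The second step is to sum these per-coordinate quantities over $i$ and replace sums by integrals. The bias sum $\sum_i i^{-b}(\text{contraction}_i)^2$ is governed by a cutoff index $i_\star$, the largest coordinate learned within $T$ steps --- $i_\star = \Theta((\eta T)^{1/a})$ for SGD and a $\beta$-shifted version for the accelerated method --- so that $\sum_{i>i_\star} i^{-b} \asymp i_\star^{1-b}$; the variance sum $\sum_i \lambda_i(\text{var}_i)$ is an effective-dimension count divided by $T$, again $\beta$-dependent for the accelerated method. Tracking the exponents --- splitting the coordinate range at both the overdamped/underdamped transition $\lambda_i \asymp \beta^2$ and the learned/unlearned threshold, then balancing --- yields $T^{1/a-1}$ and $T^{1/a-b/a}$ for SGD and their $\beta$-weighted counterparts $T^{1/a-1}\beta^{1/a^2-1/a}$ and $T^{1/a-b/a}\beta^{(1/a^2-1/a)(1-b)}$ for the accelerated method, with the $\tilde{\mathcal{O}}$ hiding logarithmic factors from the $a=1$, $b=1$ boundary cases and from the non-normal transient of the heavy-ball iteration.

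The main obstacle I expect is the accelerated \emph{variance} bound: controlling how gradient noise is amplified by the non-normal $2\times2$ transition matrix in both regimes and showing that this amplification, once weighted by $\lambda_i$, summed against the power law, and divided by $T$ from averaging, collapses to the exponent $1/a^2-1/a$. The bias side is comparatively routine once the root structure is in hand; it is reconciling the regime-dependent behaviour (effective step $\eta/\beta$ for small $\lambda_i$ versus rate $\sqrt{1-\beta}$ for large $\lambda_i$) with the power-law integrals, for both the mean trajectory and the noise, that forms the technical core --- and it is also why the stated rates should be understood to hold only in a compatible range of $\beta$ and $T$ (e.g. $\beta T \gtrsim 1$, so that the fast mode has actually decayed).
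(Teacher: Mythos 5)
Your high-level scaffolding — decouple the diagonal quadratic into one-dimensional recursions, split each coordinate into a bias (mean-trajectory) term and a variance (fluctuation) term, identify a learned/unlearned cutoff index, and sum the power law — is the same bias–variance structure the paper invokes. But there is a genuine gap, and it is exactly where you flagged your ``main obstacle.'' You propose to analyze the \emph{plain heavy-ball} iteration, writing its $2\times 2$ per-coordinate transition matrix with a single step size $\eta$ and momentum coefficient $1-\beta$. The paper's appendix does not do this: it instantiates a two-sequence accelerated scheme (Algorithm~\ref{alg:asgd}, the ASGD of \citet{jain2018accelerating}) with \emph{two} step sizes $\delta_t$ and $\gamma_t$ that are coupled to the momentum via $\gamma = \tilde\Theta(\beta^{-1+1/a})$, run under a piecewise-constant geometrically decaying schedule, and then \emph{cites} the general excess-risk upper bound of \citet{liu2025optimal} (reproduced as Theorem~\ref{thm:asgd-upper-bound}) and plugs in $\mathbf{M}_{ii} = \tilde{\mathcal{O}}(i^{b-a-1})$, $k^* = \tilde\Theta((T\gamma)^{1/a})$. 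This algorithmic distinction is not cosmetic: for stochastic least squares it is a known negative result (indeed the starting point of Jain et al.) that constant-step heavy-ball and Nesterov momentum do \emph{not} improve over vanilla SGD, because the stationary noise variance of the $2\times2$ companion system is inflated by a factor of order $1/\beta$ that exactly cancels the bias speedup. Your analysis, carried through honestly, would rediscover this and fail to produce the claimed $\beta^{1/a^2-1/a}$ and $\beta^{(1/a^2-1/a)(1-b)}$ weights; the favorable exponents only emerge once the ``extra'' step size $\gamma$ is allowed to grow with $1/\beta$ in the carefully coupled way that the ASGD scheme permits without destabilizing.

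Two smaller mismatches are worth noting. First, you are taking the theorem's ``constant stepsize $\eta = \Theta(1)$'' at face value, whereas the paper's proof actually uses the piecewise-geometric decay $\delta_t = \delta_0/4^{\ell-1}$, $\gamma_t = \gamma_0/4^{\ell-1}$ over $\Theta(\log T)$ phases — that schedule is what produces the $T^2$-order contraction of the low-frequency bias in Theorem~\ref{thm:asgd-upper-bound} and the $\tilde{\mathcal{O}}$ log factors. Second, you would need to reproduce the effective-dimension/bias split of \citet{liu2025optimal} under the change of metric $\bs\Sigma' = \mathbf{M}^{-1/2}\bs\Sigma\mathbf{M}^{-1/2}$; your outline never introduces the auxiliary prior matrix $\mathbf{M}$, but it is essential for making the bias term scale like $i_\star^{1-b}$ rather than diverging when $b$ is small. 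If you want to carry out a from-scratch proof, the right move is to replace your heavy-ball transition matrix with the $2\times 2$ companion matrix of the ASGD iteration (which has the two step sizes and the $1/(1+\beta)$ averaging built in), rederive the overdamped/underdamped split and stationary covariance in that parameterization, and only then sum the power law — at which point you would in effect be reproving the Liu et al. bound that the paper simply cites.
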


In the context of validation loss, the term $T^{1/a - 1}$ in vanilla SGD and $T^{1/a - 1}\beta^{1/a^2 - 1/a}$ in accelerated SGD correspond to term $\mathcal{A}$ in \eqref{eq:decomposition}, reflecting the validation loss increase due to the solution's variance. In parallel, $T^{1/a - b/a}$ in vanilla SGD and $T^{-b/a + 1/a}\left(\beta^{1/a^2 - a/1}\right)^{-b+1}$ in accelerated SGD capture the deterministic optimization component of the loss and correspond to $\mathcal{B}$. When $a>b$, the term $\mathcal{B}$ dominates the excess risk, and choosing momentum $\beta = T^{(a/b-1)/(1/a-1)}$ yields the smallest upper bound on the excess risk. In contrast, when $a\le b$---an easier regime for deterministic optimization---the variance term $\mathcal{A}$ dominates, and adding momentum (any $\beta\le 1$) increases the final excess risk. This confirms that momentum is not a universal accelerator: in variance-dominated regimes ($a \le b$), it can harm statistical efficiency and even degrade performance.

\subsection{Empirical Insights}
\label{sec:empiricalevidence}

The above analyses illustrate a general theoretical principle on the potential reducibility of momentum. In what follows, we present empirical observations from transformer training that also brings insights into this reducibility. 

\textbf{Sparse Gradients.}
One empirical property of the transformer is the sparsity in gradient matrices. Its existence is demonstrated in Fig.\ref{fig: heatmap} in Appendix~\ref{apx:empir_ill}, where we illustrate the scales of the gradient of the embedding layers and attention projection layers: Most of the columns/rows of the gradient matrices are filled with near-zero values.

The sparse gradients will mitigate the efficacy of momentum, potentially rendering it redundant.  Specifically, the low-frequency gradient signals disrupt gradient accumulation across iterations, and momentum is primarily dominated by single gradients. Consequently, since the architecture units in the transformer optimization exhibit independence \citep{martens2015optimizing}, this invalid accumulation lets the momentum update in a single unit collapse into vanilla gradient descent.

\begin{wrapfigure}[14]{r}{0.48\textwidth}
  \vspace{-5pt}
  \centering
  \includegraphics[width=\linewidth]{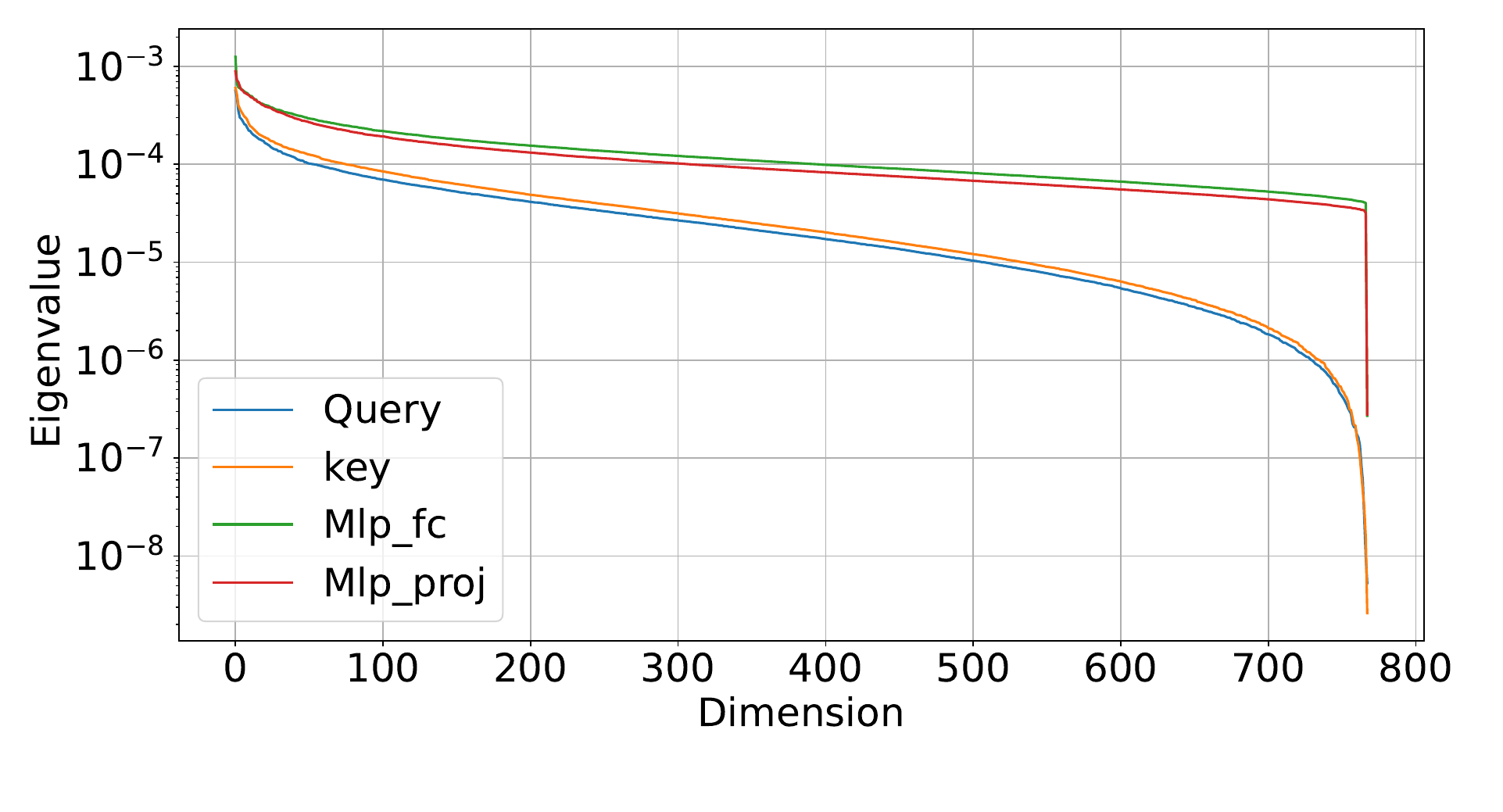}
  \vspace{-20pt}
  \caption{The spectral distribution of features in each block of 10th layer in GPT-2 124M at $10\%$ of the training steps.}
  \label{fig:emp insight}
  \vspace{-10pt}
\end{wrapfigure}
\vspace{-4pt}

\textbf{Gradients Concentrate on a Low-Rank Structure.} Our second observation leading to momentum reducibility is a consistent low effective rank for gradient matrices: few large eigenvalues concentrate in a low-rank subspace, and the following eigenvalues in the tail are extremely small during LLM training. Besides, the low-rank subspace also displays temporal stability, as the dominant singular subspace associated with the low-rank structure changes slowly over time \citep{gur2018gradient}. Fig.\ref{fig:emp insight} illustrates that the energies of the momentum in Query, Key, and MLP blocks concentrate in the top $5\%$ eigenvalues.

This insight has inspired a class of memory-efficient optimization methods, such as LoRA, Galore, and their variants \citep{zhao2024galore, hu2022lora, lialin2023relora}. These techniques constrain parameter updates to a low-rank subspace, thereby significantly reducing memory overhead by discarding residual components. However, a notable limitation arises during pretraining: such methods often underperform compared to full-rank optimization approaches like Adam. The performance gap can be attributed to the non-negligible information loss caused by discarding certain subspaces. Consequently, this naturally raises the open problem: Is it possible to attain the performance of full-rank optimization methods while maintaining the memory of the low-rank structure?

\section{Our Method}
Claims in Section~\ref{sec:starting point} suggest the potential reducibility of momentum in optimization, while in transformer optimization, this reducibility appears less evident. Transformer optimization is neither an easier problem where momentum can be entirely omitted, nor have existing methods effectively exploited low-rank structures to reduce the momentum without degrading performance.

This tension can be addressed by the insight: heterogeneous transformer blocks are better suited to distinct momentum designs. With this insight, we propose our method AdaPM. In AdaPM, instead of treating blocks uniformly, we introduce a non-uniform momentum design for the transformer blocks, tailored to the optimization difficulty. This includes full momentum, no momentum, and low-rank momentum. This partition will be detailed in Section~\ref{sec:nonuniformmomentum}. In parallel, for low-rank momentum, we specifically propose a debiased estimator which empirically { \textbf mitigates} the performance degradation caused by compressing gradients into low-rank structures.

\subsection{Non-Uniform Momentum Design}\label{sec:nonuniformmomentum}


The first component of AdaPM, non-uniform momentum reduction, classifies the reliance on momentum in each transformer block into three distinct regimes: no momentum, full momentum, and low-rank momentum with a debias technique. Empirical results in Table~\ref{tbl:partition} support the division strategy and the heterogeneous momentum requirements: embedding and attention output components exhibit comparable validation loss with and without momentum, whereas even applying a low-rank approximation to the value blocks—in contrast to its effect on other blocks—noticeably slows convergence.

Concretely, our division strategy is illustrated as follows:
\vspace{-0.2em}

\noindent \textbf{(1) Embedding and Attention Output Projection Blocks.} 
Gradients in these transformer components are sparse and lack temporal persistence, rendering momentum of limited value as discussed in Section~\ref{sec:empiricalevidence}. We disable momentum to reduce the momentum overhead.
\vspace{-0.5em}

\noindent \textbf{(2) Query, Key, and MLP blocks.}
These blocks possess more challenging optimization landscapes. Although momentum accelerates convergence, the gradient signal concentrates on a low-rank subspace, leaving the residual with limited information. We therefore adopt a debiased low-rank approximation: (i) compress the momentum via a low-rank projection to reduce momentum; (ii) correct the induced bias by reintroducing a current-iteration residual. This reduces momentum while preserving information that naive truncation would discard, thereby maintaining performance. We will detail this debiased estimator in Section~\ref{sec: debiased}.
\vspace{-0.5em}

\noindent \textbf{(3) Value Blocks.}
For the value layers, however, momentum reduction updates prove inadequate. For value blocks, our method preserves a full momentum update to ensure effective optimization.


Through this non-uniform momentum design, AdaPM achieves a substantial reduction in momentum compared to full momentum methods, such as Adam. Table~\ref{Table: mem} reports the reduction ratio relative to full momentum: an aggregated reduction to merely $5\%$ of the original momentum is guaranteed across models with multiple scales.

\begin{table}[t]
\caption{Comparsion of validation loss with various partition principles on GPT-2 124M. Here Q, K, V, O, Em, stand for Query, Key, Value, and Attention Output, Embedding, respectively, and Full, Low-rank, None stand for Full Momentum, Low-rank Momentum with our correction method, No Momentum, respectively. } 
\label{tbl:partition}
\small
\begin{center}
\begin{tabular}{cccc|cccc}
\hline
 Full  & Low-rank&  None& Final Loss &Full  & Low-rank&  None& Final Loss
\\ \hline 
All blocks &-&-&2.962 &-&Q,K,V, Mlp&Em,  O&3.143$(\mathbf{\uparrow })$\\
Q,K,V, Mlp&-&Em, O&2.964&V&Q,K,  Mlp&Em, O&2.997\\
\end{tabular}
\end{center}
\end{table}

\subsection{Debiased Low-Rank Estimator}
\label{sec: debiased}
For layers utilizing the low-rank momentum, to avoid performance degradation by discarding components outside the low-rank components, we propose a bias-corrected estimation of low-rank momentum to incorporate residual information. Concretely, our low-rank momentum estimation involves two components: (1) low-rank momentum approximation tracking and (2) a bias-correction step.  We first summarize our method for low-rank update in Algorithm~\ref{alg:mars_adamw}.

\begin{algorithm}[t]
\caption{Low Rank Update with Correction for a $m\times n$ layer $\mathbf{W}$}\label{alg:mars_adamw}
\begin{algorithmic}[1]
\Require Weight-decay coefficient $\lambda$, decay rates of momentum $\beta_1, \beta_2$, rank of the momentum approximation matrices $r$ and learning rate schedule $\{\eta_t\}_{t=1}^T$
\State Initialize $\mathbf{L}_0 \in \mathbb{R}^{m\times r}\gets \mathbf{0}$, $\mathbf{R}_0 \in\mathbb{R}^{r\times n}\gets \mathbf{0}$,  $\mathbf{v}_0 \in  \mathbb{R}^{m\times n}\gets \mathbf{0}$ and step $t$ $\gets 0$
\For{$t = 1 \textbf{ to } T$}
    \State Obtain mini-batch gradient $\tilde\nabla f(\mathbf{W}_t,\xi_t)$
    \State  $\mathbf{m}_t \gets (1-\beta_1)\tilde\nabla f(\mathbf{W}_t, \xi_t) + \beta_1 \mathbf{L}_{t-1} \mathbf{R}_{t-1}$
    \State $\mathbf{L}_t, \mathbf{R}_t = \arg\min_{\mathbf{L},\mathbf{R}}\|\mathbf{L}\mathbf{R}-\mathbf{m}_t\|_F^2$ 
    \State $r_t=  \mathbf{L}_t\mathbf{R}_t-\mathbf{m}_t $\Comment{Approximation residual}x
    \State $\mathbf{v}_t = \beta_2 \mathbf{v}_{t-1} + (1-\beta_2) [\tilde\nabla f(\mathbf{W}_t, \xi_t)]^{\odot 2}$\Comment{Standard second-order momentum update}
    \State $\mathbf{m}^{c}_t = \mathbf{m}_t-\frac{\beta_1r_t}{1-\beta_1}$\Comment{Bias correction for low-rank momentum}
    \State $\mathbf{W}_{t+1} = \mathbf{W}_t - \eta_t \left( \operatorname{clip}\left(\frac{\mathbf{m}^c_t}{\sqrt{\mathbf{v}_t}+\epsilon},1\right) + \lambda \mathbf{x}_t \right)$
\EndFor
\end{algorithmic}
\end{algorithm}

The component of tracking low-rank momentum approximations avoids constraining the approximation to a fixed or rarely updated subspace. We incrementally update a low-rank approximation $\mathbf{L}_t\mathbf{R}_t$ of the momentum $\mathbf{M}_t\in\mathbb{R}^{m\times n}$ in each step. Concretely, at iteration $t$, given parameter matrix $\mathbf{W}_t \in \mathbb{R}^{m \times n}$ and the stochastic gradient $\tilde\nabla f(\mathbf{W}_t, \xi_t)$, leveraging the estimate from the previous iteration, $\mathbf{L}_{t-1}\mathbf{R}_{t-1}$, the update is defined by the following optimization problem:
\begin{align}\label{eq:lowrank_update}
\mathbf{L}_t\mathbf{R}_t \in \arg\min_{\mathbf{L},\mathbf{R}}\left\|\mathbf{L}\mathbf{R} -\left((1-\beta_1)\tilde\nabla f(\bf W_t, \xi_t) + \beta_1 \mathbf{L}_{t-1}\mathbf{R}_{t-1} \right)\right\|^2.
\end{align}

The optimization problem in \eqref{eq:lowrank_update} is a standard matrix factorization task, which can be efficiently solved using gradient-based methods \citep{xie2017cumf_sgd}. In implementation, we apply gradient descent warm-starting from the previous estimate $\mathbf{L}_{t-1}\mathbf{R}_{t-1}$. The method typically stabilizes within 5 iterations, yielding accurate low-rank momentum updates with negligible overhead.
The detail of this low-rank approximation can be found at Appendix \ref{sec: lowrank alg}. 

The second component of our method is the bias correction. Directly applying $\mathbf{L}_t\mathbf{R}_t$ as the momentum discards the residual components outside the low-rank structure. Our compensation for the low-rank structure leverages the following one-step residual 
\begin{align}\label{eq:onestepresidual}
r_t = \mathbf{L}_t\mathbf{R}_t - \Big((1-\beta_1)\tilde{\nabla} f(\mathbf{W}_t, \xi_t) + \beta_1 \mathbf{L}_{t-1}\mathbf{R}_{t-1}\Big).
\end{align}
At iteration $t$, $r_t$ denotes the approximation error. Because the $\bf L_{t-1}\bf R_{t-1}$ in the momentum accumulation of equation \ref{eq:lowrank_update} carries forward the residuals from previous steps. Therefore, the bias accumulates as a weighted sum of past errors $r_{t-1}, r_{t-2}, \cdots$. To compensate for the accumulated bias, we refine the momentum estimate by
\begin{align}\label{eq:bias_correction}
\mathbf{m}^c_t = \mathbf{m}_t - \frac{\beta_1r_t}{1-\beta_1}.
\end{align}
where $\frac{\beta_1r_t}{1-\beta_1}$ serves the residual correction to the low-rank approximation. To justify the correction term $\frac{\beta_1 r_t}{1-\beta_1}$ in \eqref{eq:bias_correction}, we assume that the per-iteration residuals are (approximately) stationary.
\begin{assumption}[Stationary Residuals]\label{asp:r_id}
The one-step residuals $\{r_t\}_{t\ge 1}$ are identically distributed across iterations, i.e., $r_t \stackrel{d}{=} r_{t'}$ for all $t,t'\ge 1$. Besides, there exists a constant $C$ such that $\mathbb{E} r_t$ exists and $\|\mathbb{E} r_t\|\leq C$.
\end{assumption}
The near-stationarity of $r_t$ arises from the smoothing induced by the moving average and the incremental updates of $\mathbf{L}, \mathbf{R}$. Moreover, in practice it suffices that residuals are nearly stationary over short horizons $r_t, r_{t+1}, \dots, r_{t+k}$, since the exponential moving average down-weights older terms. {Empirically, this stationary is illustrated as in Fig.\ref{fig: GPT-2 bias}(b), where we observe high consistency of $r_t$s' distributions over windows of $20$ steps.\color{red}}

We compare the compensated momentum $\mathbf{m}_t^{c}$ with the following full-rank momentum:
\begin{align}\label{eq:mfdef}
\mathbf m^f_t = (1-\beta_1)\tilde\nabla f(\mathbf{W}_t,\xi_t) + \beta_1  \mathbf m^f_{t-1}, \quad \text{with} \quad \mathbf{m}^f_0 = \bf 0.
\end{align}
Under Assumption~\ref{asp:r_id}, the following theorem establishes that the proposed debiased momentum precisely eliminates the bias induced by the low rank structure.
\begin{theorem}\label{thm:nobias}
If Assumption~\ref{asp:r_id} holds, then the compensated momentum $\mathbf{m}_t^c$ will asymptotically eliminate the bias in the low-rank estimator $\mathbf{m}^f_t$:
\begin{align*}
 \left\|\mathbb{E}\left[ \mathbf{m}^c_t - \mathbf{m}^f_t\right]\right\| \leq \frac{C}{1-\beta_1}\beta_1^{t+1},
\end{align*}
and therefore $\lim_{t\to\infty} \mathbb{E}[ \mathbf{m}^c_t - \mathbf{m}^f_t] =\bs 0 $.
\end{theorem}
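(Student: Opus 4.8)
The plan is to unroll both recursions explicitly and compare their expectations term by term, using the stationarity assumption to cancel the residual-induced bias. First I would write down the closed form of the compensated momentum. From Algorithm~\ref{alg:mars_adamw}, $\mathbf{m}_t = (1-\beta_1)\tilde\nabla f(\mathbf{W}_t,\xi_t) + \beta_1 \mathbf{L}_{t-1}\mathbf{R}_{t-1}$ and $\mathbf{L}_t\mathbf{R}_t = \mathbf{m}_t + r_t$ by the definition of the one-step residual in \eqref{eq:onestepresidual}. Substituting, $\mathbf{L}_t\mathbf{R}_t = (1-\beta_1)\tilde\nabla f(\mathbf{W}_t,\xi_t) + \beta_1 \mathbf{L}_{t-1}\mathbf{R}_{t-1} + r_t$, which is exactly the full-rank recursion \eqref{eq:mfdef} perturbed by an additive $r_t$ at each step. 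Unrolling from $\mathbf{L}_0\mathbf{R}_0 = \mathbf 0$ gives $\mathbf{L}_t\mathbf{R}_t = \sum_{s=1}^t \beta_1^{t-s}\big[(1-\beta_1)\tilde\nabla f(\mathbf{W}_s,\xi_s) + r_s\big] = \mathbf m^f_t + \sum_{s=1}^t \beta_1^{t-s} r_s$, the last equality using that $\mathbf m^f_t = \sum_{s=1}^t \beta_1^{t-s}(1-\beta_1)\tilde\nabla f(\mathbf{W}_s,\xi_s)$.

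Next I would assemble $\mathbf{m}^c_t$. We have $\mathbf{m}^c_t = \mathbf{m}_t - \frac{\beta_1 r_t}{1-\beta_1}$ and $\mathbf{m}_t = \mathbf{L}_t\mathbf{R}_t - r_t$, so $\mathbf{m}^c_t = \mathbf{L}_t\mathbf{R}_t - r_t - \frac{\beta_1 r_t}{1-\beta_1} = \mathbf{L}_t\mathbf{R}_t - \frac{r_t}{1-\beta_1}$. Combining with the unrolled form, $\mathbf{m}^c_t - \mathbf m^f_t = \sum_{s=1}^t \beta_1^{t-s} r_s - \frac{r_t}{1-\beta_1}$. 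Now $\sum_{s=1}^t \beta_1^{t-s} r_s = r_t + \sum_{s=1}^{t-1}\beta_1^{t-s} r_s$, and I would take expectations: under Assumption~\ref{asp:r_id} every $\mathbb{E}[r_s]$ equals a common vector $\mu$ with $\|\mu\|\le C$. Hence $\mathbb{E}[\mathbf{m}^c_t - \mathbf m^f_t] = \mu\Big(\sum_{s=1}^t \beta_1^{t-s} - \frac{1}{1-\beta_1}\Big) = \mu\Big(\frac{1-\beta_1^t}{1-\beta_1} - \frac{1}{1-\beta_1}\Big) = -\mu\,\frac{\beta_1^t}{1-\beta_1}$.

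Finally, taking norms, $\big\|\mathbb{E}[\mathbf{m}^c_t - \mathbf m^f_t]\big\| = \frac{\beta_1^t}{1-\beta_1}\|\mu\| \le \frac{C}{1-\beta_1}\beta_1^t$. This is even slightly sharper than the stated bound $\frac{C}{1-\beta_1}\beta_1^{t+1}$ (a re-indexing of the momentum initialization, e.g.\ starting the sum at $s=0$ or a convention that $\mathbf{m}^f_0$ contributes one extra $\beta_1$ factor, recovers the $\beta_1^{t+1}$ form; I would reconcile the off-by-one with the paper's indexing convention in passing). Since $\beta_1\in(0,1)$, the bound vanishes as $t\to\infty$, giving $\lim_{t\to\infty}\mathbb{E}[\mathbf{m}^c_t - \mathbf m^f_t] = \mathbf 0$.

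The argument is essentially a bookkeeping exercise, so there is no deep obstacle; the one point requiring care is the hidden dependence between $r_s$ and the gradients: the residual $r_s$ depends on $\mathbf{L}_{s-1}\mathbf{R}_{s-1}$ and on $\xi_s$, so ``identically distributed'' in Assumption~\ref{asp:r_id} must be read as an unconditional marginal statement about $\mathbb{E}[r_s]$ — which is all the linearity of expectation actually needs here. I would state this explicitly so the cancellation $\mathbb{E}[r_s]=\mu$ is not conflated with any independence claim, and I would note that the telescoping identity $\sum_{s=1}^t \beta_1^{t-s} = \frac{1-\beta_1^t}{1-\beta_1}$ is the only nontrivial computation.
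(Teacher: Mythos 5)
Your argument is correct and it is essentially the paper's argument, just expressed as an explicit closed-form unrolling of $\mathbf{L}_t\mathbf{R}_t$ rather than the paper's step-by-step telescoping of the recursion $\mathbf{m}_t - \mathbf{m}_t^f = \beta_1(\mathbf{m}_{t-1} - \mathbf{m}_{t-1}^f) + \beta_1 r_{t-1}$. Both hinge on the same two facts: $\mathbf{L}_t\mathbf{R}_t$ satisfies the full-rank recursion perturbed by $r_t$, and Assumption~\ref{asp:r_id} makes the residual sum a geometric series.

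Two small remarks on your closing paragraph. First, your exponent $\beta_1^t$ is actually the correct one under Assumption~\ref{asp:r_id} as stated: the paper's $\beta_1^{t+1}$ arises from telescoping all the way to $k=t$ and then summing $\sum_{i=1}^{t}\beta_1^i\mathbb{E} r_{t-i}$ as if $r_0$ were part of the stationary family, whereas $r_0=\mathbf{L}_0\mathbf{R}_0-\mathbf{m}_0=\mathbf{0}$ lies outside $\{r_t\}_{t\ge 1}$ and contributes zero, leaving $\frac{\beta_1-\beta_1^t}{1-\beta_1}\mathbb{E} r_t$ and hence $\beta_1^t$ exactly as you computed. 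Second, you describe your bound as ``slightly sharper,'' but since $\beta_1\in(0,1)$ we have $\beta_1^{t+1}<\beta_1^t$: your bound is the slightly \emph{looser} of the two, though it is the one that actually follows from the assumptions (the paper's version silently needs $\mathbb{E} r_0 = \mathbb{E} r_1$, which conflicts with $r_0=\mathbf{0}$ unless $\mathbb{E} r_t = \mathbf{0}$). The discrepancy is a constant factor of $\beta_1$ and does not affect the limit, but the direction of ``sharper'' should be flipped. Your caveat that Assumption~\ref{asp:r_id} is used only as an unconditional marginal statement via linearity of expectation is a worthwhile clarification that the paper leaves implicit.
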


\section{Experiment}
We now validate the effectiveness of AdaPM on both pre-training and fine-tuning tasks. All GPT-2-1.5B experiments were trained on NVIDIA H800-80GB GPUs, while all other models were trained on NVIDIA A6000 GPUs.

\subsection{Pretraining}\label{sec:ex-pre}
\textbf{Setups.} We perform pre-training on the GPT-2 series \citep{brown2020language} (125M to 1.5B parameters) on the OpenWebText \citep{Gokaslan2019OpenWeb} dataset using the nanoGPT implementation. Following standard setting in Adam-mini \citep{zhangadam}, the models are trained with a consistent configuration of 512 batch size, 1024 sequence length, and 0.1 weight decay. We pretrain Llama series (130M to 340M) \citep{touvron2023llama} on C4 \citep{raffel2020exploring}. For all pretraining cases, we apply a cosine learning rate decay (with 2000 warm-up steps) and global gradient clipping at a threshold of 1.0.  We tune the learning rates for all methods and report the curve with the smallest final loss. For AdaPM, we set the same learning rate  as Adam for low rank momentum blocks and full momentum blocks and for blocks without momentum, we set learning rate to $0.75$ times of the learning rate in other blocks. We consistently set $r=5\%$ and $T=100$, and the ablation study is presented in Section \ref{sec: abl}. In addition to AdamW, our evaluation compares AdaPM with several widely-used memory-efficient optimizers: 
\begin{itemize}
\item Adafactor \citep{shazeer2018adafactor}: We incorporate momentum with $\beta_1 =0.9$ to ensure a fair comparison with other methods. We  apply Adafactor with the default hyperparameters: clipping threshold $d=1.0$,$ \epsilon$ =(None, 0.001), $\tau= -0.8$. By tuning the hyperparameters, we set the learning rate to 0.01.
\item Galore \citep{zhao2024galore}:  We set subspace frequency $T$ to 200 and scale
 factor $\alpha$ to 0.25 across all model sizes. We pick the same rank $r=0.5\times \text{dimension}$ while smaller ranks lead to much worse final loss, and we apply them to all multi-head attention layers
 and feed-forward layers in the models.
\item Adam-mini \citep{zhangadam}: We use the same hyperparameter as AdamW, including $\beta_1=0.9$, $\beta_2=0.95$, $\epsilon=10^{-8}$.
\end{itemize} 

\subsubsection{Comparison with Existing Memory-Efficient Optimizers}

 As demonstrated in Fig.\ref{fig: GPT-2}, the loss curves of AdaPM closely resemble those of AdamW in both the GPT-2 series and the Llama series, while alternative methods exhibit slower convergence characteristics. We report the memory cost and GPU hours in Table \ref{Table: mem}, where the batch sizes per GPU are optimized for each algorithm within the GPU memory limits. By implementing a rank reduction to $5\%$ of the original matrix dimensionality, our approach successfully reduces momentum memory consumption to approximately $55\%$ of baseline requirements. Thanks to the memory cut-down, AdaPM can support larger batch sizes per GPU.  We repeated the experiment five times, confirming that our method possesses stability and reproducibility. The results of our study can be replicated, provided the parameter settings are maintained.
\begin{figure}[t]
    \centering
    \subfigure[ GPT-2-124M.]{
    \begin{minipage}{0.46\linewidth}
    \includegraphics[width=70mm]{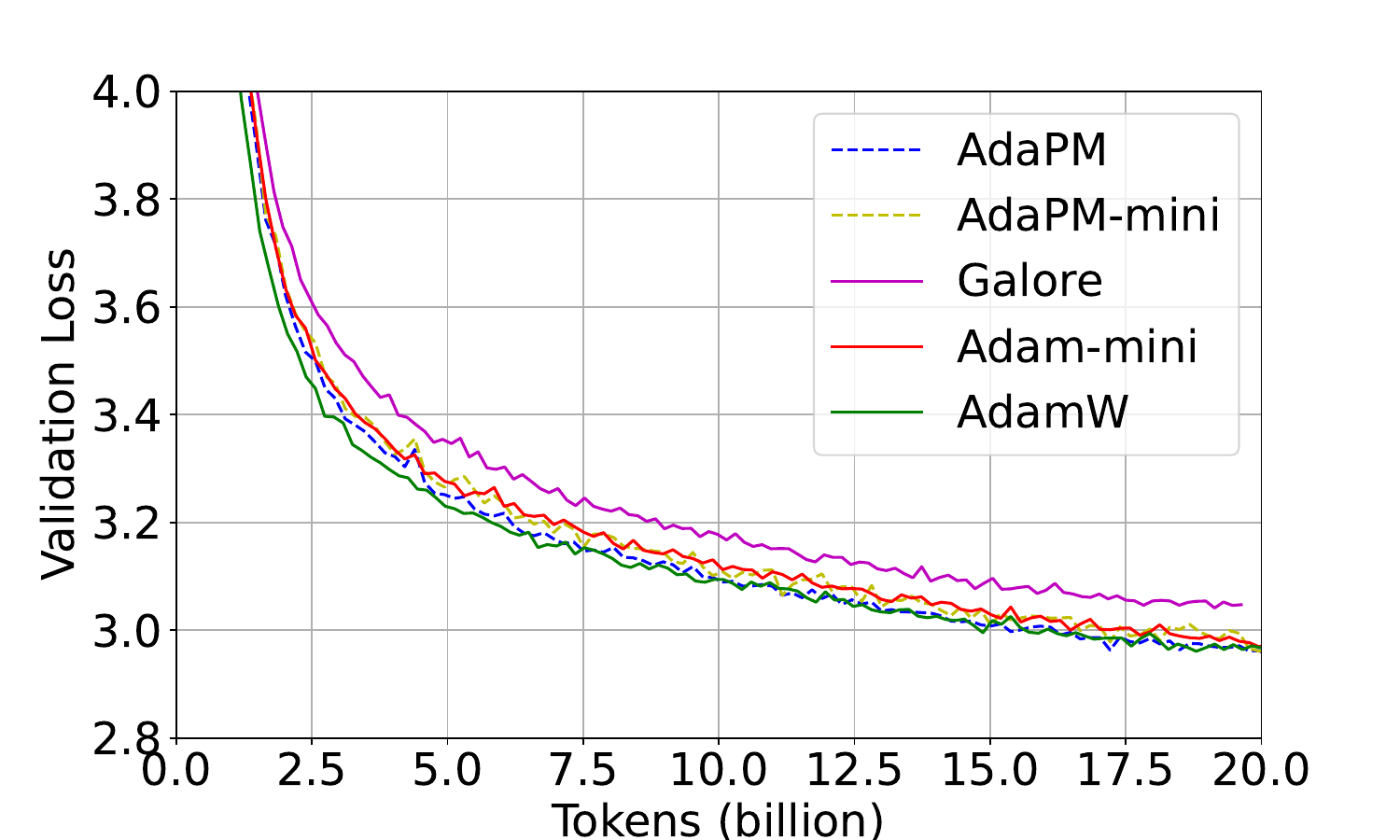}
    \end{minipage}
    
    }
    \subfigure[GPT-2-330M.]{
    \begin{minipage}{0.46\linewidth}
    \includegraphics[width=70mm]{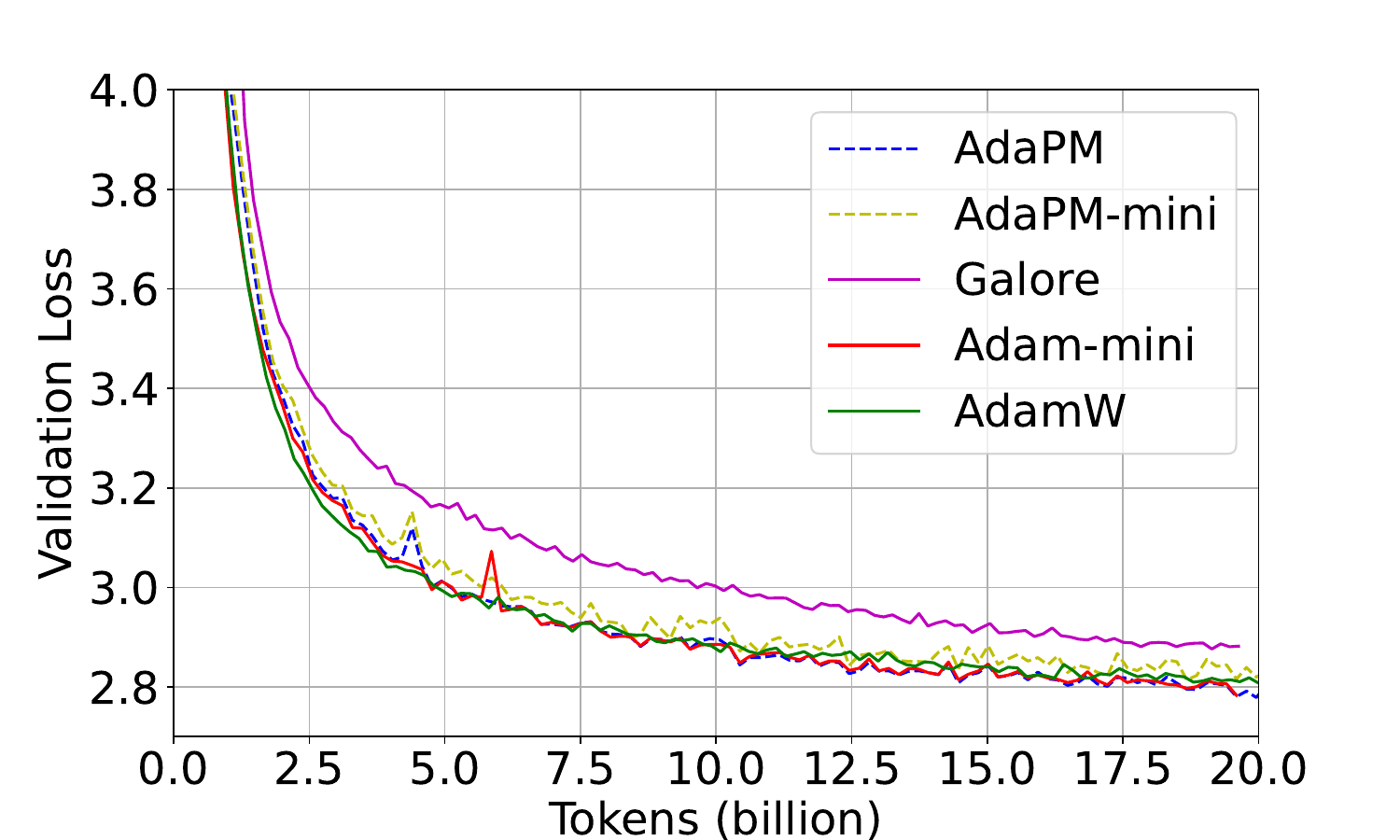}
    \end{minipage}
    }

    \subfigure[Llama-130M.]{
    \begin{minipage}{0.46\linewidth}
  \includegraphics[width=68mm]{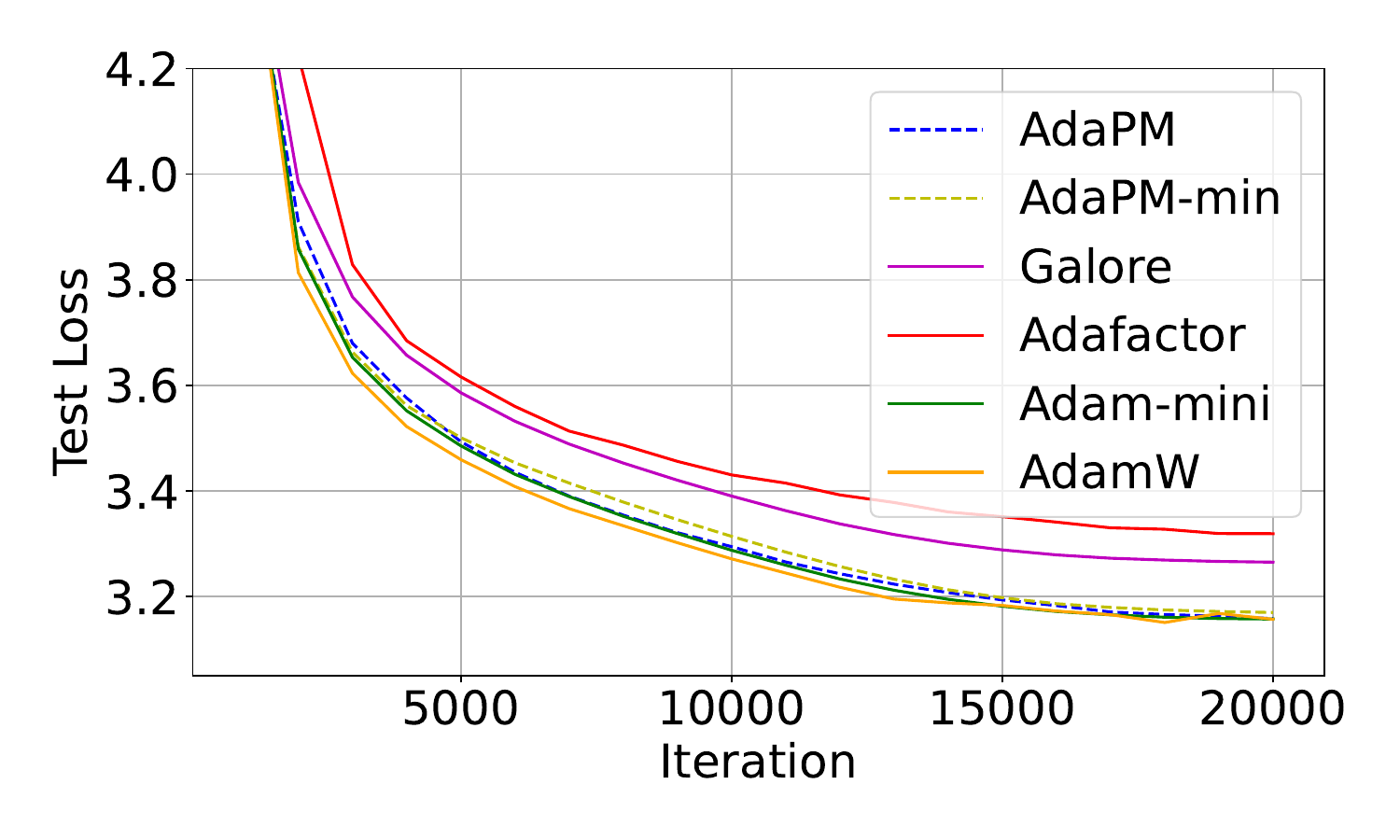}
    \end{minipage}
    }
    \subfigure[Llama-340M.]{
    \begin{minipage}{0.46\linewidth}
    \includegraphics[width=68mm]{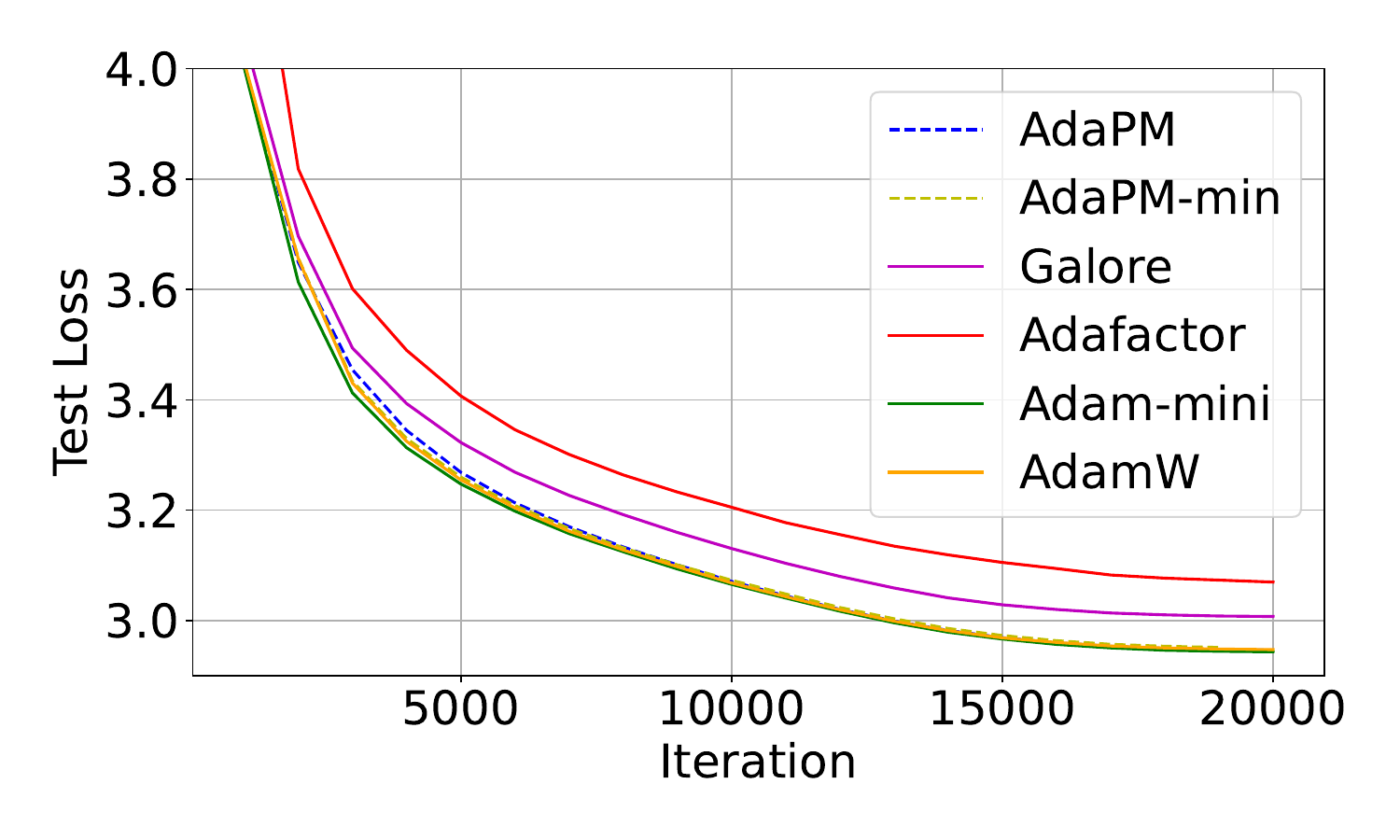}
    \end{minipage}
    }
    \caption{(a)-(b): Loss curves of pre-training GPT-2 series from 124M to 330M. The 1.5B GPT-2 pretrain is in Section~\ref{sec:intro}. (c)(d): Test loss of pre-training Llama-2 series from 130M to 340M. AdaPM performs on par or better than AdamW, while other methods perform worse. }
    \label{fig: GPT-2}
\end{figure}
\begin{table}[t]
\caption{Memory cost of AdamW v.s. AdaPM. Calculation is based on float32, which is a standard choice for optimizer states.}
\label{Table: mem}
\small
\begin{center}
\begin{tabular}{ccc|ccc}
\hline
\multicolumn{3}{c|}{Llama-350M} & \multicolumn{3}{c}{GPT-2-1.5B} \\
\hline
Algorithm & Memory &GPU Hours & Algorithm & Memory &GPU Hours  \\ \hline
Adam & 2.72G & 13.8&Adam & 12.48G &26.67 \\
Adam-mini &1.36G &11.7 & Adam-mini &6.24G &20.32\\
AdaPM &1.48G $(\mathbf{\downarrow 46\%})$ & 11.9$(\mathbf{\downarrow 14\%})$& AdaPM & 6.98G $(\mathbf{\downarrow 44\%})$ &22.11 $(\mathbf{\downarrow 17\%})$\\
AdaPM-mini &0.12G $(\mathbf{\downarrow 96\%})$ &9.8$(\mathbf{\downarrow 29\%})$ & AdaPM-mini & 0.74G $(\mathbf{\downarrow 94\%})$  &17.92 $(\mathbf{\downarrow 33\%})$\\
\end{tabular}
\end{center}
\end{table}


\subsubsection{Ablation Study and Sensitivity Analysis}\label{sec: abl}

We conduct experiments on the GPT-2 124M to evaluate the impact of bias correction in AdaPM. All configurations share identical hyperparameters, with the sole distinction being the inclusion/exclusion of bias correction in low-rank gradient covariance estimation. 

The experimental results in Fig.\ref{fig: GPT-2 bias}(a) demonstrate that omitting bias correction significantly slows down the convergence speed by about 1.96 times even with a rank of $r=50\%$. The convergence speed decreases significantly compared to the bias-corrected method, with the training loss plateauing at higher values throughout the optimization trajectory. Consequently, low-rank approximations without proper bias correction fail to maintain the original model's convergence properties.

\begin{figure}[t]
    \centering
    \subfigure[]{
    \begin{minipage}{0.32\linewidth}
    \includegraphics[width=47.5mm]{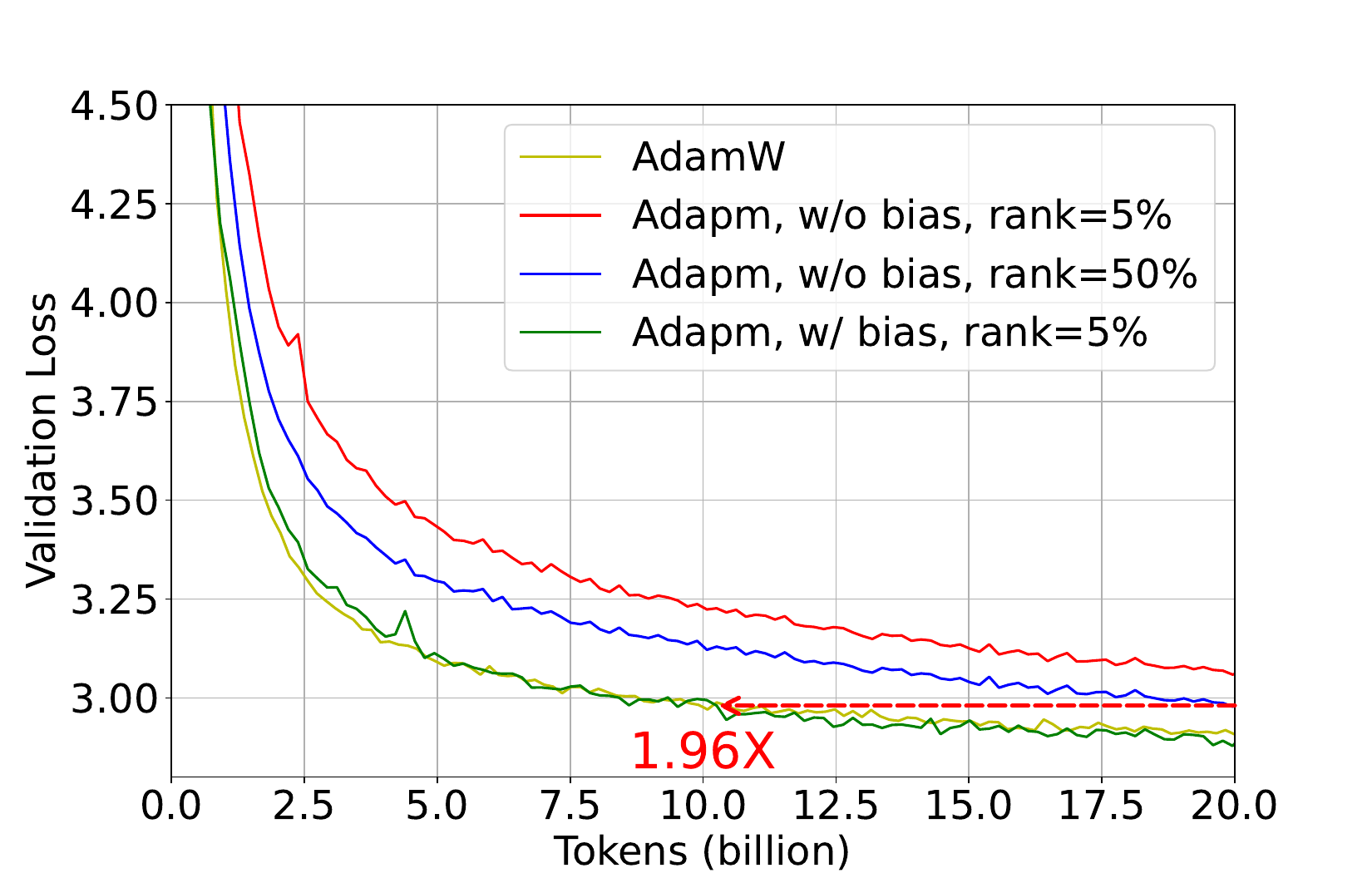}
    \end{minipage}
    }
    \subfigure[]{
    \begin{minipage}{0.31\linewidth}
    \includegraphics[width=49mm]{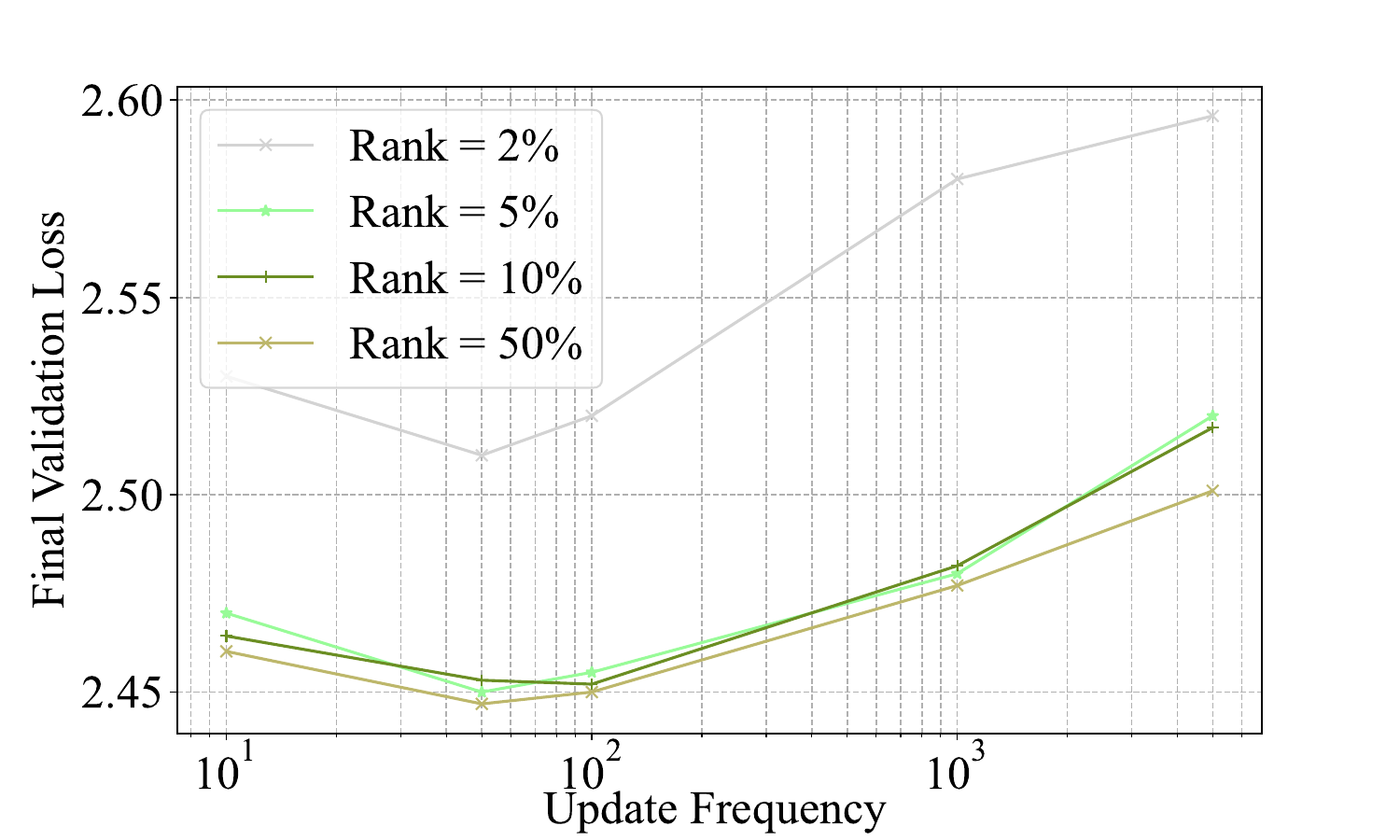}
    \end{minipage}
    }
    \subfigure[]{
    \begin{minipage}{0.32\linewidth}
    \includegraphics[width=47.5mm]{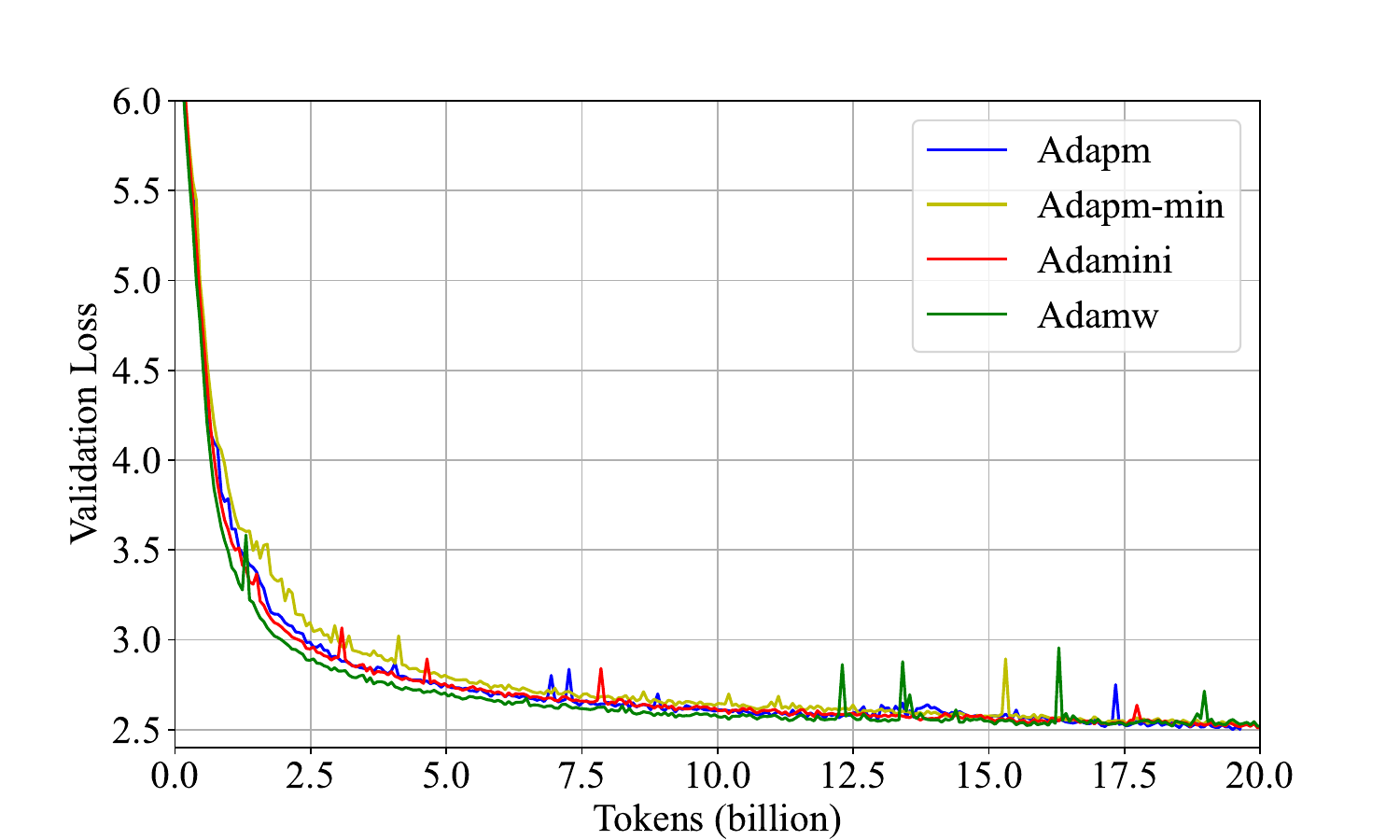}
    \end{minipage}
    }

    \caption{(a) Loss curves of pre-training GPT-2 series with or without bias-correction. (b)Applying AdaPM for pretraining GPT-2-1.5B with different rank and update frequency $T$. (c) Applying AdaPM to Adam-mini for pretraining GPT-2-1.5B.}
    \label{fig: GPT-2 bias}
\end{figure}

We present the final validation loss corresponding to rank ratios of \( r = 2\%, 5\%, 10\%, 50\% \), and update frequencies of \( T = 10, 50, 100, 1000, 5000 \) in Fig.\ref{fig: GPT-2 bias}(b). The closely overlapping trajectories observed across all tested configurations indicate that our algorithm is robust to the choice of rank ratio. Notably, an update frequency of \( T = 100 \) proves sufficient to maintain competitive convergence speed while minimizing computational overhead. This observed stability suggests that even relatively aggressive low-rank approximations—such as those using only 2\% or 5\% of the full rank—can effectively preserve the essential optimization dynamics of the model. Moreover, such approximations yield substantial memory savings, highlighting the practical efficiency and scalability of the proposed method in resource-constrained settings.

\subsubsection{Combining AdaPM with Other First-Order Statistics Reduction Method}
By integrating AdaPM with Adam-mini, we develop a memory-efficient optimization approach that simultaneously reduces the memory footprint of both first-order momentum and second-order variance terms in Adam-type optimizers. As shown in Fig.\ref{fig: GPT-2}(c) and Table \ref{Table: mem}, this combined strategy achieves approximately $95\%$ memory reduction for the optimizer states while maintaining comparable convergence speed to standard Adam.  Experimental results in Fig.\ref{fig: GPT-2 bias}(c) demonstrate that this unified approach maintains model performance on pretraining GPT-2-1.5B while dramatically decreasing memory overhead.

\subsubsection{Scaling Law of AdaPM}
We conduct systematic experiments across the GPT-2 model family (40M, 125M, 350M, 774M, and 1.5B parameters) on the OpenWeb-Text dataset to evaluate AdaPM's scalability. The results plotted in Fig.\ref{fig: RLHF}(a) accord with the log-linear scaling law and suggest this optimization approach remains viable for models beyond 1.5B parameters if the scaling law holds, with projected memory savings of 45\%. Crucially, the convergence stability shows no degradation with increasing model size, indicating that the training dynamics are preserved.

\subsection{LLM Finetuning}

\begin{figure}[tb]
    \centering
    \subfigure[Scaling laws in parameters]{
    \begin{minipage}{0.3\linewidth}
    \includegraphics[width=48mm]{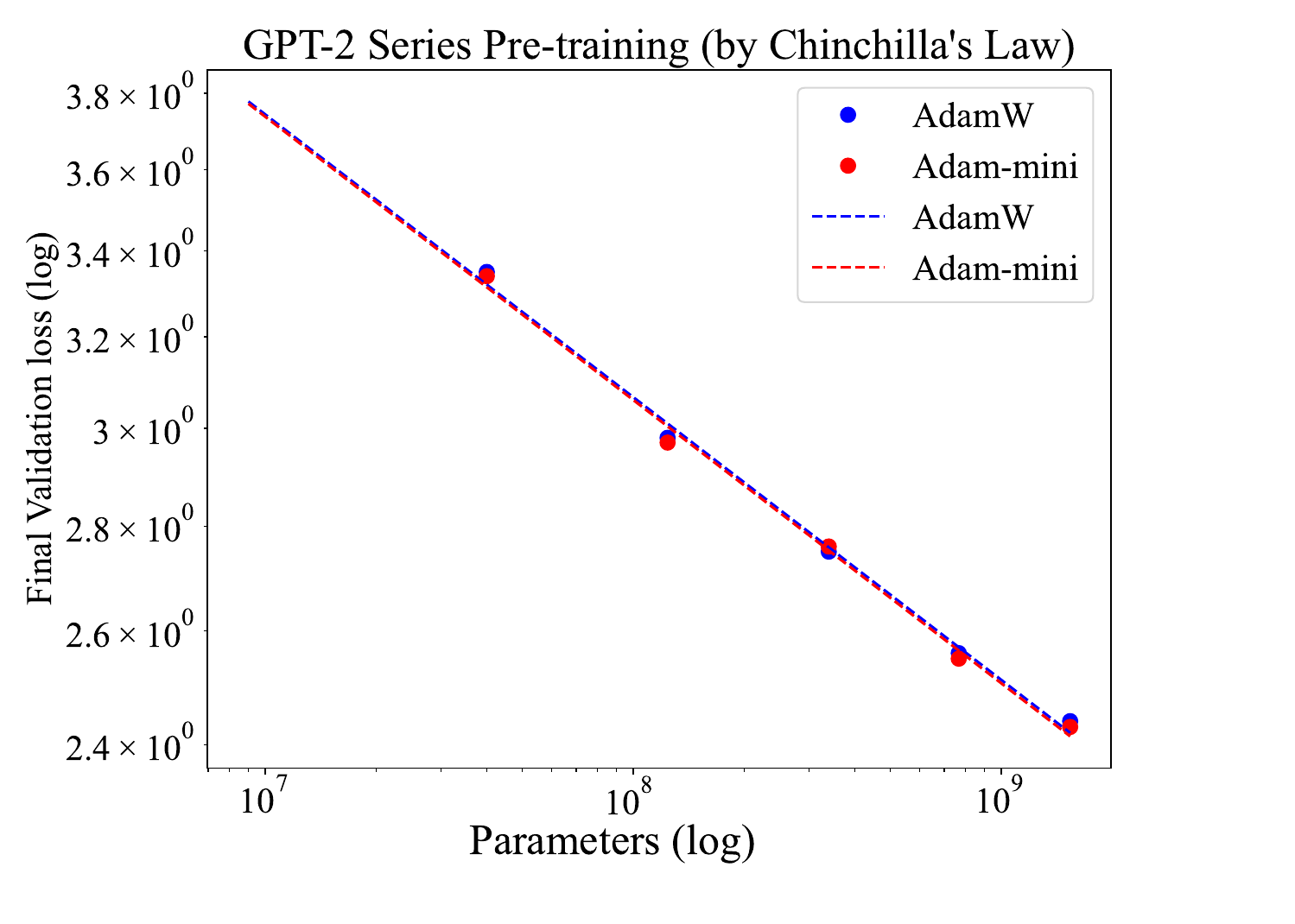}
    \end{minipage}
    }
    \subfigure[SFT]{
    \begin{minipage}{0.32\linewidth}
    \includegraphics[width=46mm]{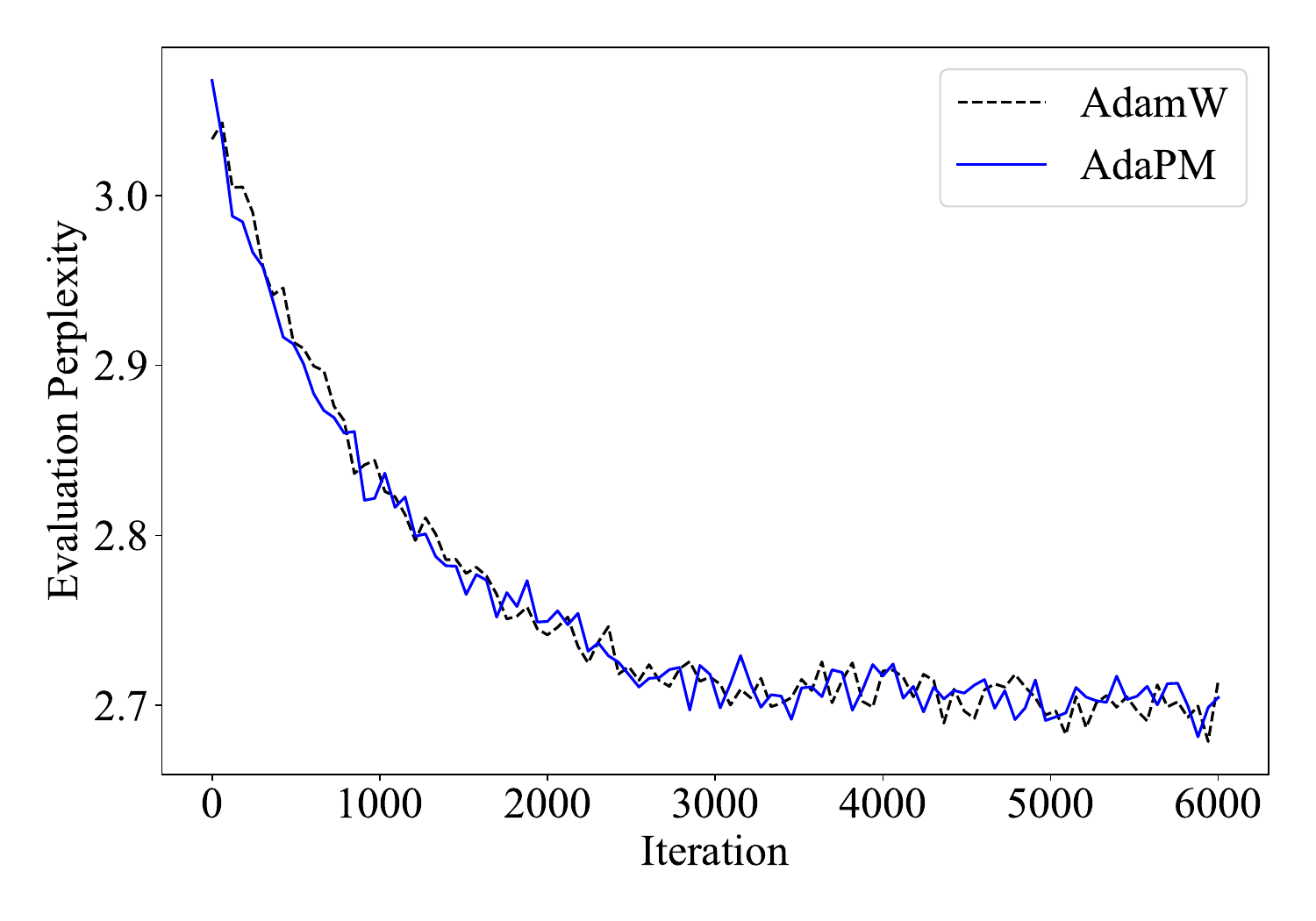}
    \end{minipage}
    }
    \subfigure[RLHF]{
    \begin{minipage}{0.32\linewidth}
    \includegraphics[width=46mm]{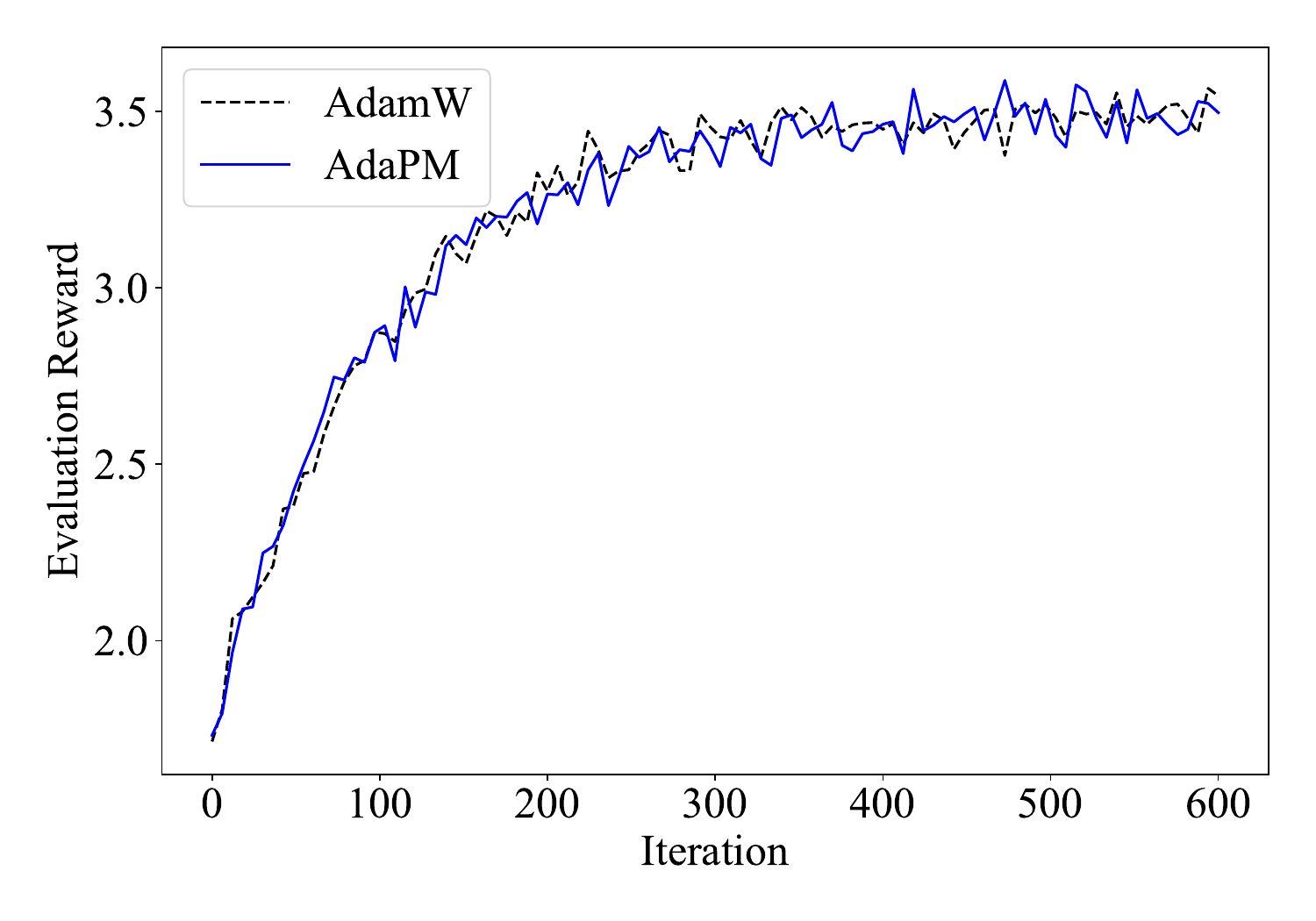}
    \end{minipage}
    }
    \caption{Scaling laws in terms of parameters in (a) suggest that AdaPM can be scaled up to larger models (if the scaling law holds). (b)(c): SFT,and RLHF when aligning Llama3-8B. AdaPM maintains similar evaluation perplexity and reward to AdamW with $43\%$ less memory. }
    \label{fig: RLHF}
\end{figure}

We conducted a comprehensive evaluation of AdaPM on both Supervised Fine-Tuning (SFT) and Reinforcement Learning from Human Feedback (RLHF) tasks. Our experiments were based on the Llama3-8B pretrained model. For the SFT phase, we utilized the UltraFeedback dataset \citep{cui2023ultrafeedback} for training. For the RLHF phase, we implemented the established RLHF pipeline following the methodology described in \citep{ouyang2022training}, with a specific adaptation: we employed ReMax \citep{li2023remax} as our reinforcement learning optimizer. ReMax was chosen as a memory-efficient alternative to the commonly used Proximal Policy Optimization (PPO) algorithm \citep{schulman2017proximal}, which helps in optimizing the policy towards the preference reward model more efficiently.

The results, as illustrated in Fig.\ref{fig: RLHF}, demonstrate that AdaPM delivers performance that is comparable to or surpasses that of the AdamW optimizer across both SFT and RLHF benchmarks. This consistent performance highlights AdaPM's effectiveness and robustness in complex training scenarios, suggesting its potential as a competitive alternative for modern LLM training pipelines.

\section{Conclusion}
We propose AdaPM, an adaptive partial momentum strategy for pretraining and finetuning. Starting from the reducibility of momentum in the Transformer optimizer, AdaPM significantly reduces the momentum demand in the AdamW optimizer through a non-uniform momentum design and a debiasing technique. AdaPM attains a remarkable $94\%$ momentum memory saving on GPT-2 1.5B without sacrificing convergence. It can further reduce memory by up to 95\% in optimizer states by combining the memory-efficient technique on the second-order statistic. There also remains potential for enhancing the design of AdaPM, such as extending the partition method to various other prevalent models \citep{ho2020denoising} and applying the bias-corrected method to the low-rank estimation of activation, which will further reduce the memory cost of training. We leave the development of stronger designs as a future direction.

\bibliography{iclr2026_conference}
\bibliographystyle{iclr2026_conference}
 \newpage
\appendix
\section{Omitted Proofs}

\subsection{Proof of Theorem~\ref{thm: rate}}

Under the setting of Theorem~\ref{thm: rate}, we consider the momentumed ASGD as in Algorithm~\ref{alg:asgd} and the vanilla SGD as in Algorithm~\ref{alg:sgd} \citep{jain2018accelerating}. The step size $\gamma$ of Algorithm~\ref{alg:asgd} is coupled with the momentum parameter $\beta$, which can be found in the parameter choice in \citet{li2023risk}. And in our setting specified in Section~\ref{sec:learningtheory}, $\gamma$ can be decided by $\gamma = \tilde{\Theta}\left( \beta^{-1+1/a} \right)$. Besides, when $\beta = 1$, Algorithm~\ref{alg:asgd} recovers the vanilla SGD algorithm~\ref{alg:sgd}. Both algorithms consider a standard piecewise-constant learning rate schedule commonly used in theoretical analyses \citep{li2023risk}. Specifically, the total training horizon $T$ is divided into phases of length  $K = \lfloor T/\log_2 T \rfloor$, and the stepsizes are updated with 
\begin{align*}
    \delta_t = \frac{\delta_0}{4^{l-1}}, \gamma_t  = \frac{\gamma_0}{4^{l-1}}, \quad \text{if}\ K(l-1)\leq t\ \leq Kl -1.
\end{align*}
Besides, the stochastic gradients are given by $\tilde\nabla f(\mathbf{W}_t,\xi_t) = \left(\langle\mathbf{x}_t, \mathbf{W}_t\rangle - y_t\right) \mathbf{x}_t$.

\begin{algorithm}[t]
    \caption{Accelerated Stochastic Gradient Descent}\label{alg:asgd}
    \begin{algorithmic}
    \Require {Initial weight $\bs W_0$, step sizes $\delta_t$, $\gamma_t$, momentum parameter $\beta$}
    \State $\bs V_0\gets\bs W_0$
    \For{$t=1,2,\ldots,T$}
        \State $\bs U_{t-1}\gets\frac{1}{1+\beta}\bs W_{t-1}+\frac{\beta}{1+\beta}\bs V_{t-1}$
        \State $\bs W_t\gets\bs U_{t-1}-\delta_{t}\tilde\nabla f(\bs W_t, \xi_t)$
        \State $\bs V_t\gets\beta \bs U_{t-1}+(1-\beta)\bs V_{t-1}-\gamma_{t} \tilde\nabla f(\bs W_t, \xi_t)$
        \EndFor
    \end{algorithmic}
\end{algorithm}

\begin{algorithm}[t]
    \caption{Vanilla Stochastic Gradient Descent (SGD)}\label{alg:sgd}
    \begin{algorithmic}
    \Require {Initial weight $\bs W_0$, step sizes $\delta_t$}
    \For{$t=1,2,\ldots,T$}
        \State $\bs W_t \gets \bs W_{t-1} - \delta_t \tilde{\nabla} f(\bs W_t, \xi_t)$
        \EndFor
    \end{algorithmic}
\end{algorithm}

The result in Theorem~\ref{thm: rate} is adapted from the theorem in \citet{liu2025optimal} to our setup.
\begin{theorem}[Upper Bound of Accelerated SGD]\label{thm:asgd-upper-bound}
Let $\mathcal{W}_{\mathbf{M}}=\left\{\bs W^*\in\bbR^d:\left\Vert\bs W^*\right\Vert_{\bs M}^2\leq 1\right\}$. For the positive semi-definite diagonal matrix $\bs \Sigma$ and samples drawn from the distribution specified in Section~\ref{sec:learningtheory}, the excess risk of $\bs W_T$ from Algorithm~\ref{alg:asgd} can be bounded from above by
    \begin{equation}\label{eq: asgdupper}
    \begin{aligned}        
        \bbE\left\Vert\bs W_T-\bs W^*\right\Vert_{\bs\Sigma}^2\leq&\underbrace{\left(\sigma^2+2c\right)\cdot\left[\frac{k^*}{T}+T\left(\gamma+\delta\right)^2\sum_{i=k^*+1}^{d}\bs\Sigma_{ii}^2\right]}_\text{Effective Variance}\\
        &+\underbrace{\frac{\left\Vert\bs \Sigma'_{0:k^*}\right\Vert}{8T^2(\log_2 T)^4}+4\left\Vert\bs \Sigma'_{k^*:\infty}\right\Vert}_\text{Effective Bias}, \\
    \end{aligned}
    \end{equation}
    where $k^*=\max\left\{k:\bs\Sigma_{kk}>\frac{32\ln n}{(\gamma+\delta)T}\right\}$, $\bs \Sigma'=\bs M^{-1/2}\bs \Sigma\bs M^{-1/2}$, $\bs 
    \Sigma_{0:k^*}'=\bs M^{-1/2}\bs \Sigma_{0:k^*}\bs M^{-1/2}$ and $\bs 
    \Sigma_{k^*:\infty}'=\bs M^{-1/2}\bs \Sigma_{k^*:\infty}\bs M^{-1/2}$.
\end{theorem}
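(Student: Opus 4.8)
The argument follows the bias–variance decomposition that is standard for averaged/accelerated stochastic approximation on quadratics, specialized to the diagonal Gaussian design so that the general bound of \citet{liu2025optimal} can be invoked. The key structural fact is linearity: with $\tilde\nabla f(\bs W_t,\xi_t)=(\langle\bs x_t,\bs W_t\rangle-y_t)\bs x_t=\bs x_t\bs x_t^\sfT(\bs W_t-\bs W^*)-\epsilon_t\bs x_t$, the coupled state error $\bs\zeta_t\defeq\big(\bs W_t-\bs W^*,\ \bs V_t-\bs W^*\big)$ obeys an affine recursion $\bs\zeta_t=\bs A_t\bs\zeta_{t-1}+\bs b_t$, where $\bs A_t$ is a random $2d\times 2d$ operator built from $\bs x_t\bs x_t^\sfT$ together with $\delta_t,\gamma_t,\beta$, and $\bs b_t$ is the additive noise term carrying $\epsilon_t\bs x_t$. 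Splitting $\bs\zeta_t=\bs\zeta_t^{\mathrm{bias}}+\bs\zeta_t^{\mathrm{var}}$ — the bias part driven by the initialization $\bs\zeta_0=(\bs W_0-\bs W^*,\bs W_0-\bs W^*)$ with the noise switched off, the variance part started at $\bs 0$ and driven only by $\{\bs b_t\}$ — and using that the cross term vanishes after taking expectations, one obtains $\bbE\left\Vert\bs W_T-\bs W^*\right\Vert_{\bs\Sigma}^2\le 2\,\mathrm{Bias}_T+2\,\mathrm{Var}_T$; the constant $\sigma^2+2c$ in \eqref{eq: asgdupper} then emerges from the noise variance $\sigma^2$ plus a Gaussian fourth-moment (self-bounding) contribution controlled by $\left\Vert\bs W^*\right\Vert_{\bs M}^2\le 1$, with $c$ absorbing that bound.

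Because $\bs\Sigma$ is diagonal and $\bs x_t$ is Gaussian, the expected recursion decouples coordinatewise; the only residual coupling, via $\bbE[\bs x\bs x^\sfT\bs N\bs x\bs x^\sfT]=\tr(\bs\Sigma\bs N)\bs\Sigma+2\bs\Sigma\bs N\bs\Sigma$, is through a scalar trace term handled by a fixed-point/Gr\"onwall argument. Thus for each eigendirection $i$ the analysis reduces to a $2\times 2$ linear system whose transition matrix has spectral radius $\approx 1-\sqrt{(\gamma_t+\delta_t)\bs\Sigma_{ii}}$ — the accelerated square-root contraction, valid under the coupling $\gamma=\tilde\Theta(\beta^{-1+1/a})$ fixed in Section~\ref{sec:learningtheory} following \citet{li2023risk}. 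For the bias process, over the piecewise-constant schedule ($\log_2 T$ phases of length $K=\lfloor T/\log_2 T\rfloor$, stepsizes scaled by $4^{-(l-1)}$), a direction with $\bs\Sigma_{ii}$ above the threshold $\tfrac{32\ln n}{(\gamma+\delta)T}$ is contracted to an exponentially small residual and contributes only the $\tfrac{\left\Vert\bs\Sigma'_{0:k^*}\right\Vert}{8T^2(\log_2 T)^4}$ term — the $T^2(\log_2 T)^4$ factor coming from squaring the accumulated effective horizon $(\gamma+\delta)K$ of the final phase — whereas a direction below threshold is essentially frozen and contributes its unreduced weight, summing to $4\left\Vert\bs\Sigma'_{k^*:\infty}\right\Vert$; passing to the reweighted geometry $\bs\Sigma'=\bs M^{-1/2}\bs\Sigma\bs M^{-1/2}$ lets the constraint $\bs W^*\in\mathcal{W}_{\bs M}$ enter cleanly.

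For the variance process, the coupled recursion admits a stationary-covariance bound showing that each direction $i$ contributes on the order of $(\gamma_t+\delta_t)\bs\Sigma_{ii}$ to the instantaneous risk once it has mixed, and less while still transient. Splitting at the same $k^*=\max\{k:\bs\Sigma_{kk}>32\ln n/((\gamma+\delta)T)\}$: the mixed directions $i\le k^*$ each contribute $\Theta(1/T)$, totalling $k^*/T$, while the still-transient tail $i>k^*$ contributes $T(\gamma+\delta)^2\sum_{i>k^*}\bs\Sigma_{ii}^2$, and momentum does not worsen these scalings beyond the stated $(\gamma+\delta)^2$ factor precisely because of the $\gamma$–$\delta$–$\beta$ coupling. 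Collecting the two bias pieces and the two variance pieces, multiplying the variance pieces by $\sigma^2+2c$, and rewriting in terms of $\bs\Sigma'$ yields \eqref{eq: asgdupper}, which is exactly the specialization of \citet{liu2025optimal} to the diagonal Gaussian, constant-per-phase setting.

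The main obstacle is the accelerated two-sequence recursion: unlike plain SGD, the per-coordinate transition matrix is $2\times 2$, non-symmetric, and may have complex eigenvalues, so establishing the $1-\sqrt{\eta\bs\Sigma_{ii}}$ contraction uniformly across phases and showing that momentum does not inflate the variance process requires a careful Jordan-form (or generating-function) diagonalization together with the precise $\gamma=\tilde\Theta(\beta^{-1+1/a})$ coupling. Equivalently, the real work is checking that the hypotheses of \citet{liu2025optimal}'s general theorem — the moment conditions on the Gaussian design, the stepsize/momentum coupling, and the piecewise-constant schedule — hold here, after which their bound applies verbatim.
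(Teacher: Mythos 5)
The paper offers no internal proof of this theorem: it is imported from \citet{liu2025optimal} (``adapted \ldots to our setup''), and the proof environment that follows it in the appendix actually derives Theorem~\ref{thm: rate} by specializing $k^*$, $\gamma$ and $\bs M$ in \eqref{eq: asgdupper}, not by establishing \eqref{eq: asgdupper} itself. Your proposal ultimately lands in the same place --- verify the hypotheses of the general theorem of \citet{liu2025optimal} and apply it --- so at the level of rigor the paper operates at, your approach matches the paper's. The bias--variance sketch you add on top (affine two-sequence error recursion, coordinatewise decoupling from the diagonal Gaussian design via the fourth-moment identity, accelerated square-root contraction over the geometric stepsize phases, split at the effective dimension $k^*$) is the correct architecture of the underlying result in that literature, but it remains a roadmap rather than a proof: the uniform contraction for the non-symmetric per-coordinate $2\times 2$ operator, the stationary-covariance bound for the variance process under the trace coupling, and the specific constants ($8$, $4$, $32\ln n$, $\sigma^2+2c$) are asserted rather than derived, and the claimed factor-of-two bound is inconsistent with your own claim that the bias--variance cross term vanishes in expectation. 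Since the paper treats the theorem as a black-box citation, none of this constitutes a discrepancy with the paper's argument; your write-up is, if anything, more informative than the paper's.
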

\begin{proof}
In the setup of diagonal matrix $\bs \Sigma_ii = i^{-a}$, and $\bs\Sigma_{ii}\bs W_{i}^2 = i^{-b}$, we specify $\bs M$ in Theorem~\ref{thm:asgd-upper-bound} to be diagonal matrix with $\bs M_{ii} = \tilde{\mathcal{O}} \left(i^{b-a-1}\right)$, which suffices to make $\bs W^* \in \mathcal{W}_{\mathbf{M}}$.

Given the momentum $1-\beta$, the corresponding $\gamma$ is given by $\gamma = \tilde\Theta\left( \beta^{-1+1/a} \right)$, and therefore, the corresponding $k^*$ is given by $k^* = \tilde{\Theta}\left( 1/(T\gamma) \right)^{-1/a}$.

Plugging $k^*, \gamma, \mathbf{M}$ into the upper bound \ref{eq: asgdupper}, the effective variance will be bounded by $ \tilde{\mathcal{O}}\left( T^{1/a - 1} \beta^{1/a^2 - 1/a}  \right)$ and the effective bias can ben bounded by $ \tilde{\mathcal{O}}\left(  T^{1/a - b/a} \beta^{\left(1/a^2 - 1/a\right)(1-b)} \right)$, which collectively completes the bound for momentum algorithm in Theorem~\ref{thm: rate}. Furhter, setting $\beta=1$ and the momentumed SGD in Algorithm~\ref{alg:asgd} degrades to the vanilla SGD in Algorithm~\ref{alg:sgd}, and we obtain the excess risk of $ \tilde{\mathcal{O}}\left( T^{1/a - 1} + T^{1/a - b/a} \right) $ claimed in Theorem~\ref{thm: rate}.
\end{proof}

\subsection{Proof of Theorem~\ref{thm:nobias}}
\begin{proof}
First, by the definition of $\mathbf{m}_t$ in Algorithm~\ref{alg:mars_adamw} and $\mathbf{m}_t^f$ in \eqref{eq:mfdef}, we have
\begin{align}\label{eq:iter}
\begin{aligned}
    \mathbf{m}_t - \mathbf{m}_t^f = & (1-\beta_1)\tilde\nabla f(\mathbf{W}_t,\xi_t) + \beta_1 \mathbf{L}_{t-1}\mathbf{R}_{t-1} - (1-\beta_1)\tilde\nabla f(\mathbf{W}_t,\xi_t) - \beta_1 \mathbf{m}_{t-1}^f\\
    = & \beta_1r_{t-1} + \beta_1\mathbf{m}_{t-1} - \beta_1 \mathbf{m}_{t-1}^f = \beta_1(\mathbf{m}_{t-1} - \mathbf{m}_{t-1}^f) + \beta_1 r_{t-1}.
\end{aligned}
\end{align}
For any given $k$, by telescoping the recursion in \eqref{eq:iter}, we obtain
\begin{align*}
\mathbb{E}\left[\mathbf{m}_t - \mathbf{m}_t^f\right] = \beta^k\mathbb{E}\left(\mathbf{m}_{t-k} - \mathbf{m}_{t-k}^f\right) + \sum_{i = 1}^{k}\beta^{i}\mathbb{E}r_{t-i} = \beta^k\mathbb{E}\left(\mathbf{m}_{t-k} - \mathbf{m}_{t-k}^f\right) + \frac{\beta_1 - \beta_1^{k+1}}{1- \beta_1}\mathbb{E} r_{t},
\end{align*}
where the last equality follows from Assumption~\ref{asp:r_id}. Consequently, our compensated momentum satisfies
\begin{align*}
    \mathbb{E}\left[ \mathbf{m}_{t}^c - \mathbf{m}_t^f \right] = & \mathbb{E}\left[\mathbf{m}_t - \mathbf{m}_t^f \right] - \frac{\beta_1}{1-\beta_1}\mathbb{E}r_t\\
    = & \beta^k\mathbb{E}\left(\mathbf{m}_{t-k} - \mathbf{m}_{t-k}^f\right) - \frac{ \beta_1^{k+1}}{1- \beta_1}\mathbb{E} r_{t}
\end{align*}

When $k = t$, $\mathbf{m}^f_0 = \bs0$, and $\mathbf{m}_0 = \mathbf{L}_0\mathbf{R}_0 = \bs 0$, it follows that
\begin{align*}
\left\|\mathbb{E}\left[ \mathbf{m}_{t}^c - \mathbf{m}_t^f \right]\right\| =  \frac{ \beta_1^{t+1}}{1- \beta_1} \|\mathbb{E} r_{t}\| \leq \frac{C}{1-\beta_1}\beta_1^{t+1},
\end{align*}
where the second inequality follows the upper bound of $\|\mathbb{E}r_t\|$ in Assumption~\ref{asp:r_id}. This completes the proof of Theorem~\ref{thm:nobias}.

\end{proof}
\section{Algorithm for Solving Optimization Problems in \eqref{eq:lowrank_update}}
\label{sec: lowrank alg}

\begin{algorithm}[t]
\caption{Low Rank Approximation for a $m\times n$ momentum $m_t$}\label{alg:lowrank approximation}
\begin{algorithmic}[1]
\Require Iteration number $K$, low rank approximation in the last iteration $\mathbf{L}_{t-1}, \mathbf{R}_{t-1}$, rank of the momentum approximation matrices $r$ and learning rate schedule $\{\eta_k^L\}_{k=1}^K, \{\eta_k^R\}_{k=1}^K$
\State Initialize $\mathbf{L} \in \mathbb{R}^{m\times r}\gets \mathbf{L}_{t-1}$, $\mathbf{R} \in\mathbb{R}^{r\times n}\gets \mathbf{R}_{t-1}$
\For{$k = 1 \textbf{ to } K$}
    \State Calculate the gradient
    $$\frac{\partial l}{\partial\mathbf{L}}=(\mathbf{L}\mathbf{R}-\mathbf{m}_t)\mathbf{R}^\top, \quad \frac{\partial l}{\partial\mathbf{R}}=\mathbf{L}^\top(\mathbf{L}\mathbf{R}-\mathbf{m}_t)$$
    \State Update
    $\mathbf{L}=\mathbf{L}-\eta_k^L\frac{\partial l}{\partial\mathbf{L}}$
    \State Update
    $\mathbf{R}=\mathbf{R}-\eta_k^R\frac{\partial l}{\partial\mathbf{R}}$
\EndFor\\
\Return $ \mathbf{L},\mathbf{R}$
\end{algorithmic}
\end{algorithm}

The optimization problem in \eqref{eq:lowrank_update}  can be efficiently solved using gradient-based methods \citep{xie2017cumf_sgd}. The implementation is shown in Algorithm \ref{alg:lowrank approximation}, where we apply gradient descent warm-starting from the previous estimate $\mathbf{L}_{t-1}\mathbf{R}_{t-1}$. We also employ a learning rate schedule of $\eta_k^L=\eta_k^R=0.5*\left( 1+\cos \left(\frac{\pi k}{K}\right)\right)$. We set the iteration number $K=5$ which is enough for the method to be stable, yielding accurate low-rank momentum updates with negligible overhead. 
\newpage
\section{Omitted Empirical Illustration on Sparsity}\label{apx:empir_ill}
Fig.\ref{fig: heatmap} illustrates the sparsity of the gradient matrices of major transformer blocks.
\begin{figure}[]
    \centering
    \subfigure[Embedding.]{
    \begin{minipage}{0.29\linewidth}
    \includegraphics[width=57mm]{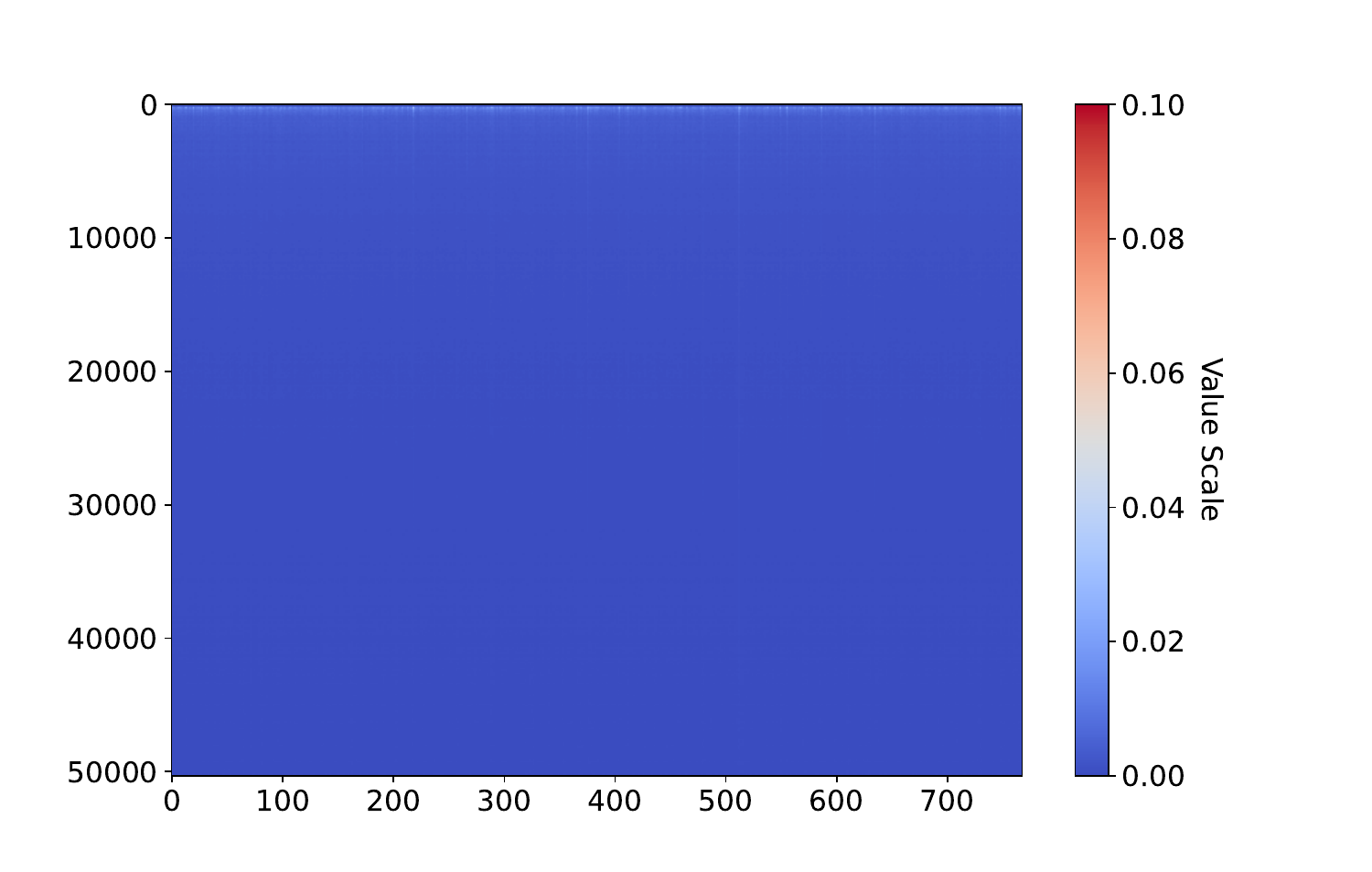}
    \end{minipage}
    }
    \subfigure[Query.]{
    \begin{minipage}{0.29\linewidth}
    \includegraphics[width=57mm]{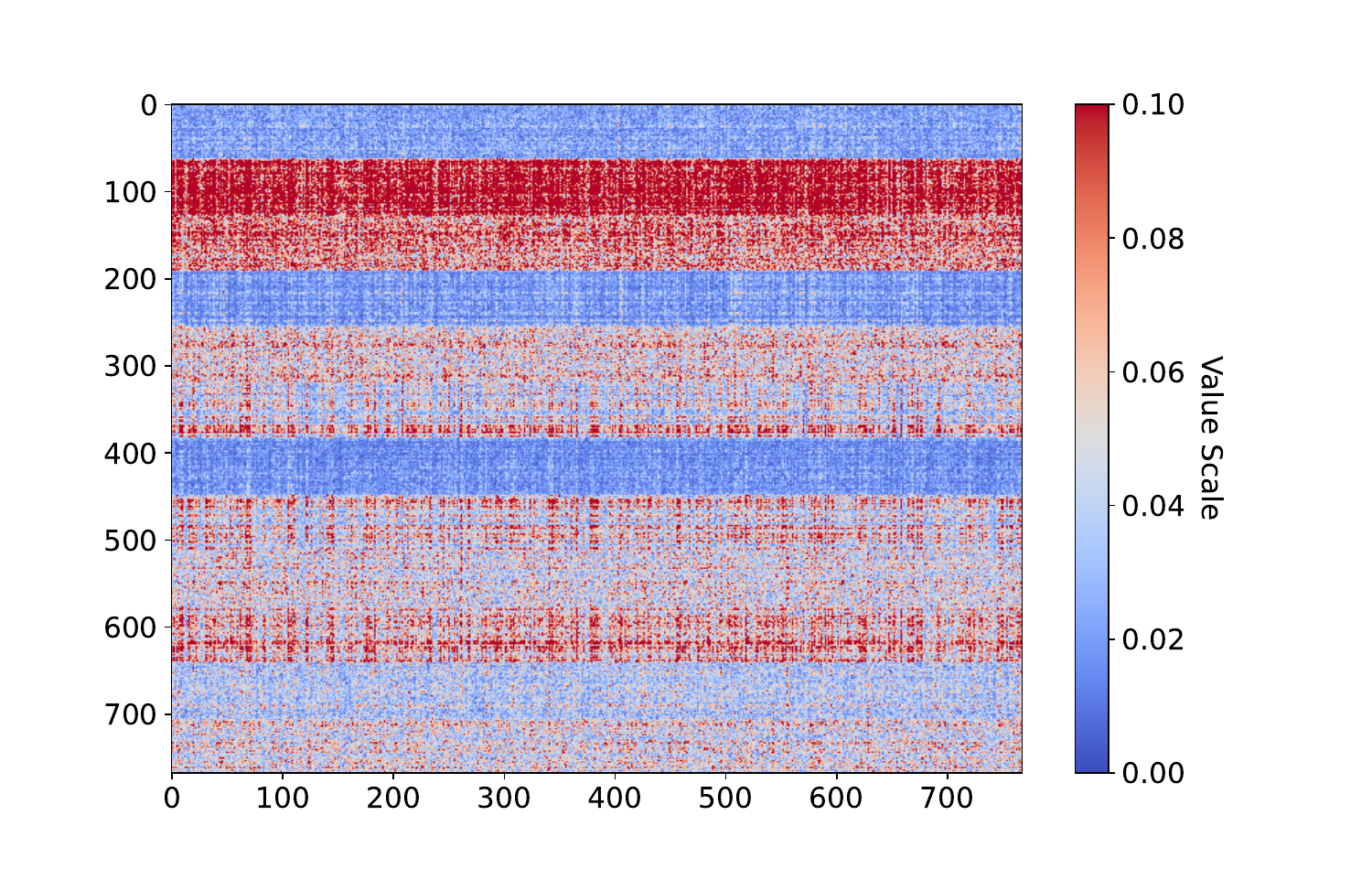}
    \end{minipage}
    }
    \subfigure[Key.]{
    \begin{minipage}{0.29\linewidth}
    \includegraphics[width=49mm]{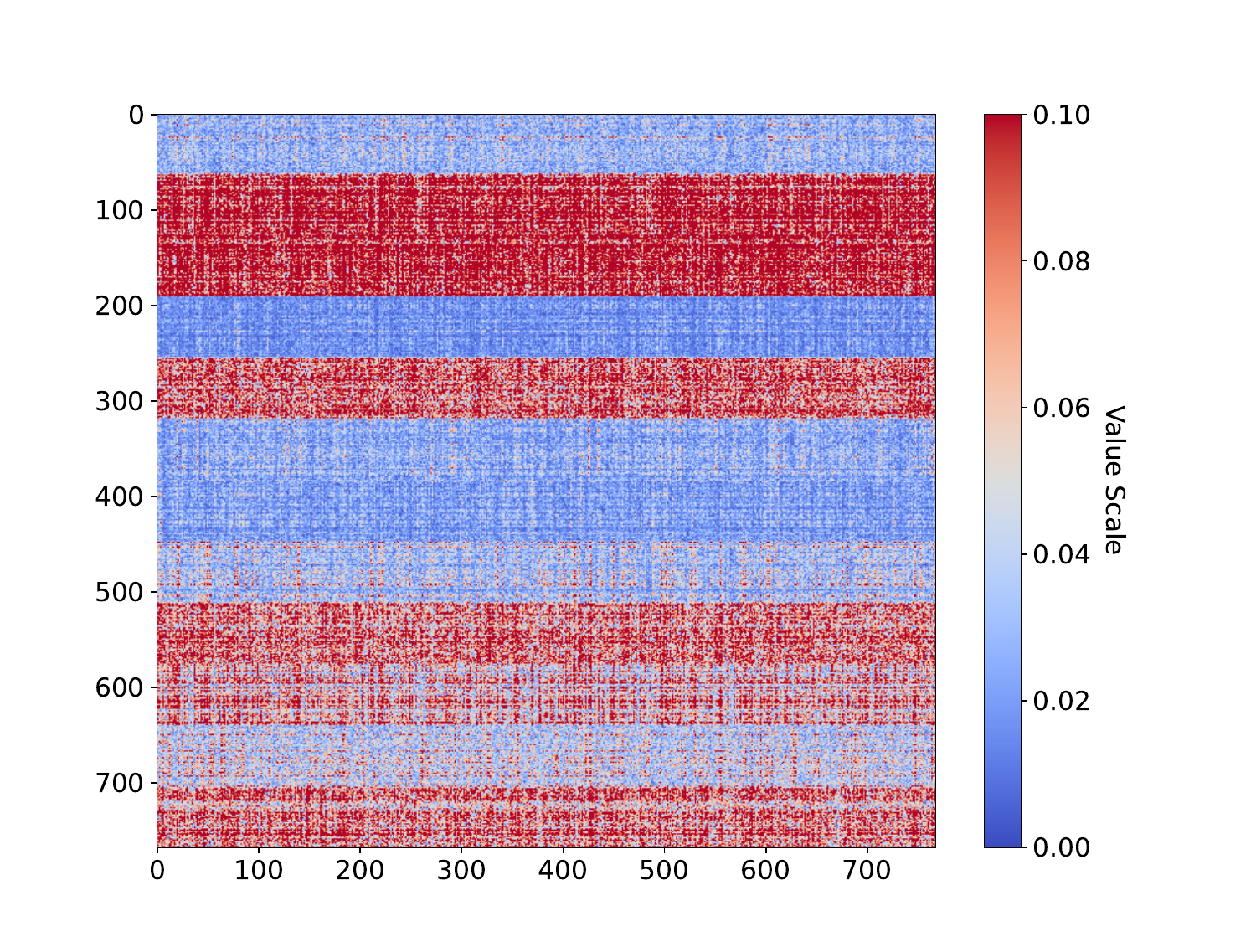}
    \end{minipage}
    }

    \subfigure[Value.]{
    \begin{minipage}{0.29\linewidth}
    \includegraphics[width=54mm]{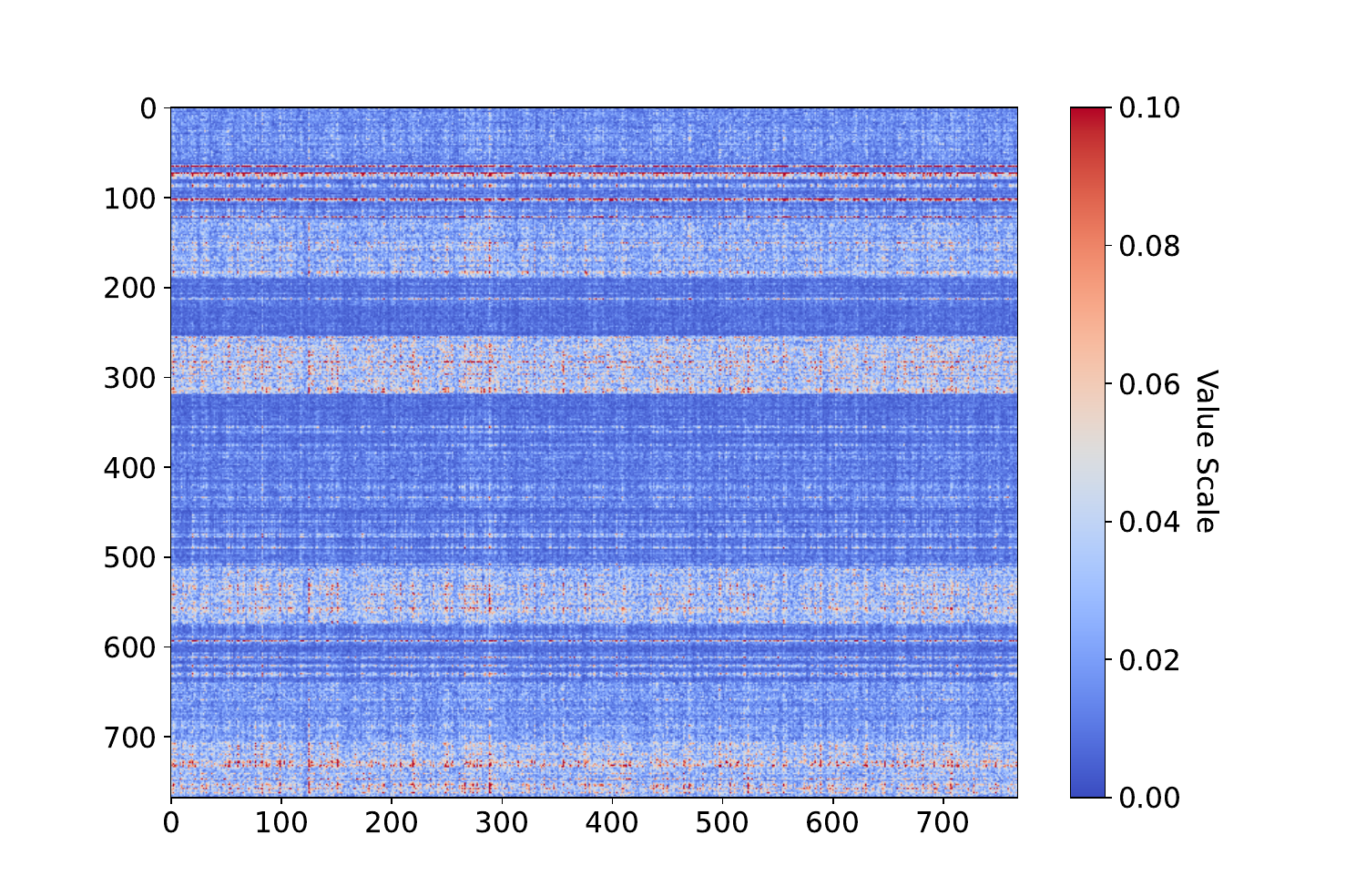}
    \end{minipage}
    } \hspace{-3mm}
    \subfigure[Attn.proj.]{
    \begin{minipage}{0.3\linewidth}
    \includegraphics[width=54mm]{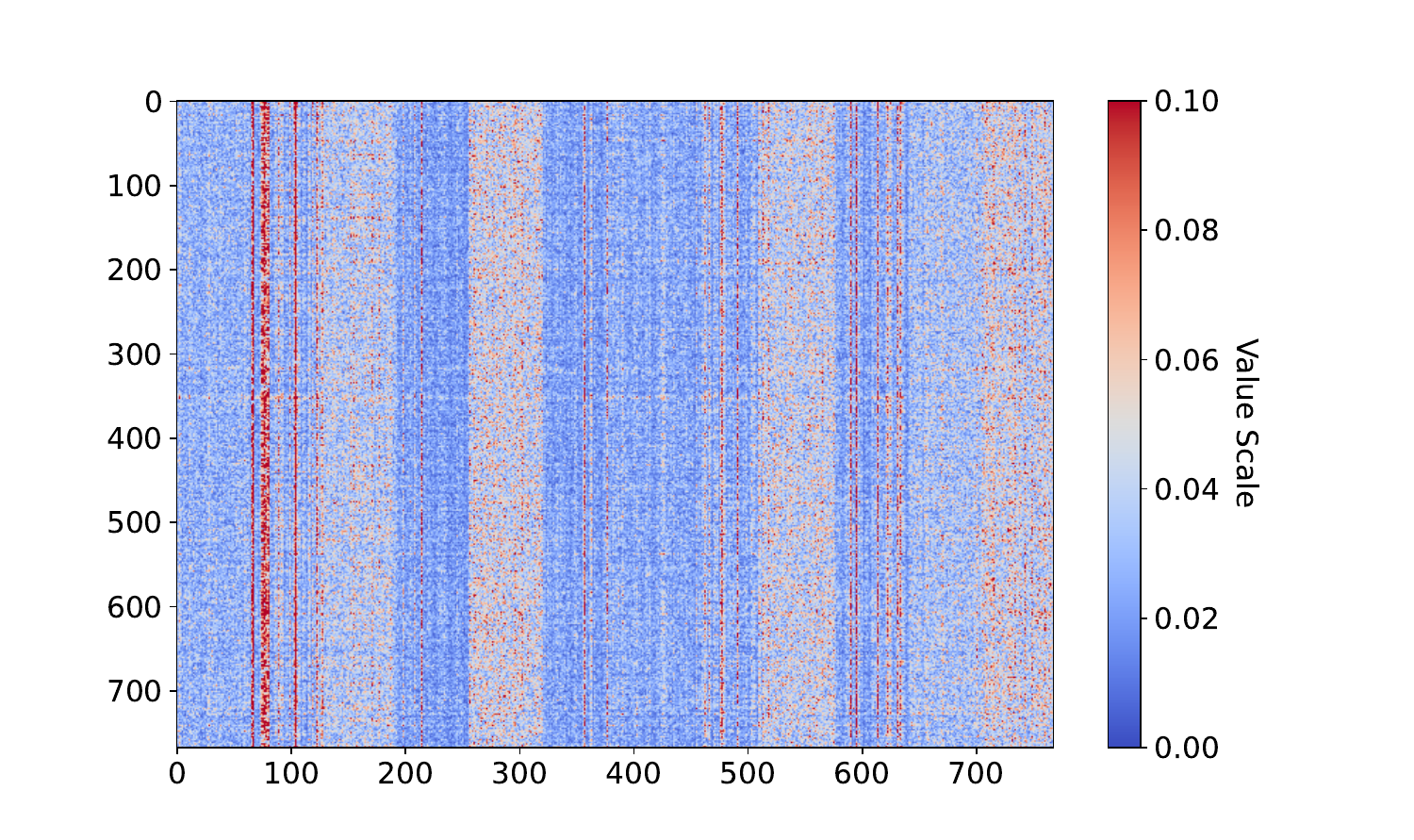}
    \end{minipage}
    }\hspace{-3mm}
    \subfigure[mlp.proj.]{
    \begin{minipage}{0.28\linewidth}
    \includegraphics[width=62mm]{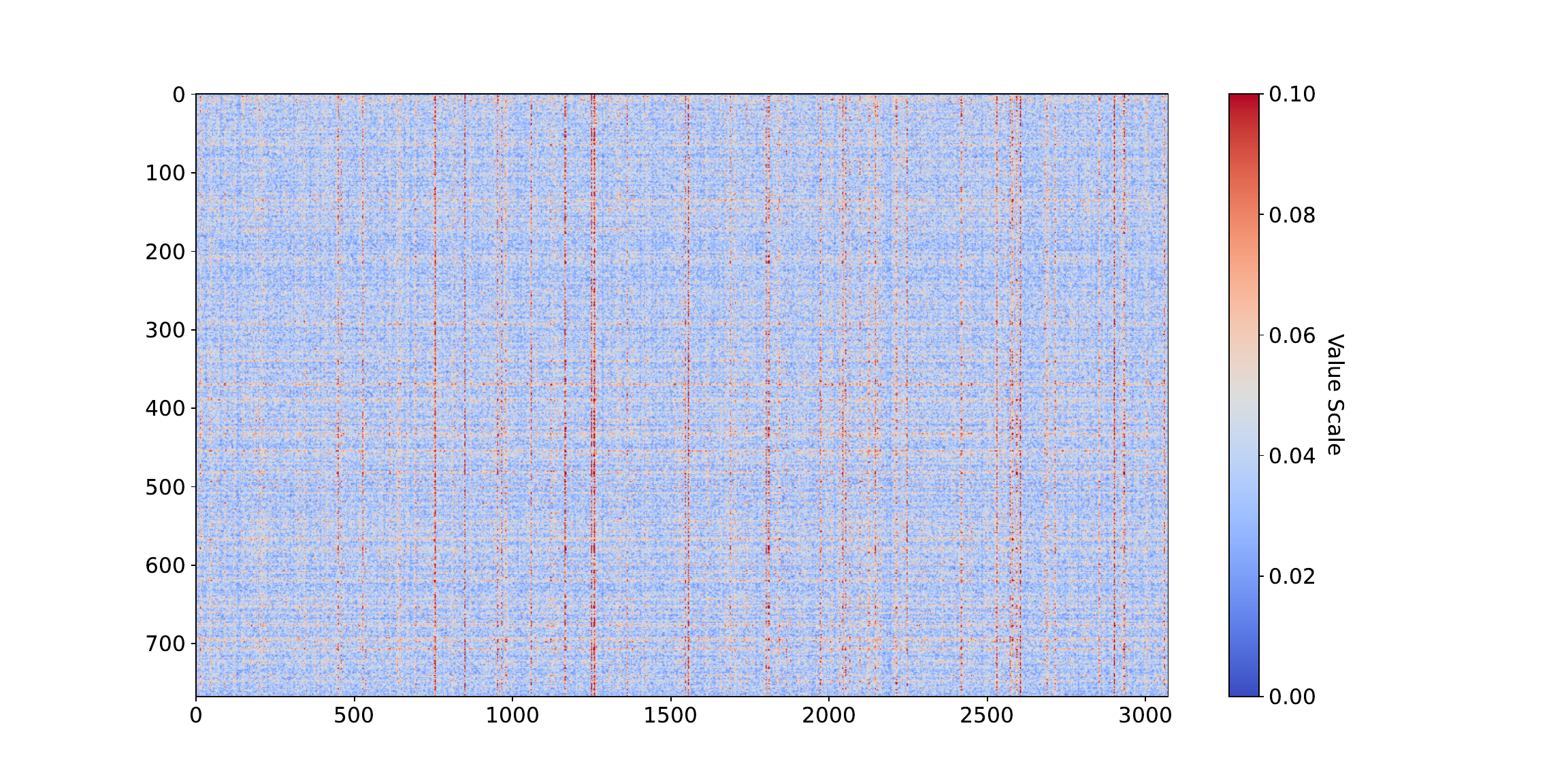}
    \end{minipage}
    }
    \caption{Heatmap of the ratio between gradients and the maximum value in gradient matrices in GPT-2 124M at $10\%$ training step.}
    \label{fig: heatmap}
\end{figure}

\end{document}